%% file: example_paper.tex
\documentclass{article}

\usepackage{microtype}
\usepackage{graphicx}
\usepackage{subcaption}
\usepackage{booktabs} 
\usepackage{paralist}
\usepackage{comment} 

\usepackage[preprint]{icml2026}

\usepackage{amsmath}
\usepackage{amssymb}
\usepackage{mathtools}
\usepackage{amsthm}

\usepackage{hyperref}

\usepackage[capitalize,noabbrev]{cleveref}

\theoremstyle{plain}
\newtheorem{theorem}{Theorem}[section]
\newtheorem{proposition}[theorem]{Proposition}
\newtheorem{lemma}[theorem]{Lemma}
\newtheorem{corollary}[theorem]{Corollary}
\theoremstyle{definition}
\newtheorem{definition}[theorem]{Definition}
\newtheorem{assumption}[theorem]{Assumption}
\theoremstyle{remark}
\newtheorem{remark}[theorem]{Remark}
\usepackage{color}
\usepackage{enumitem}

\usepackage[textsize=tiny]{todonotes}

\icmltitlerunning{Effective Frontiers: A Unification of Neural Scaling Laws}

\begin{document}

\twocolumn[
  \icmltitle{Effective Frontiers: A Unification of Neural Scaling Laws}




  \icmlsetsymbol{equal}{*}

  \begin{icmlauthorlist}
    \icmlauthor{Jiaxuan Zou}{equal,xjtu}
    \icmlauthor{Zixuan Gong}{equal,ruc}
    \icmlauthor{Ye Su}{yyy,sch1}
    \icmlauthor{Huayi Tang}{ruc}
    \icmlauthor{Yong Liu}{ruc}
  \end{icmlauthorlist}

  \icmlaffiliation{xjtu}{Department of Mathematics and Statistics, Xi'an Jiaotong University, Xi'an 710049, Shaanxi Province, China}
  \icmlaffiliation{ruc}{Department of Artificial Intelligence, Gaoling School of Artificial Intelligence, Renmin University of China, Beijing 100872, China}
    \icmlaffiliation{yyy}{University of Chinese Academy of Sciences, Beijing, 101407, China}
      \icmlaffiliation{sch1}{Shenzhen Institutes of Advanced Technology, Chinese Academy of Sciences, Shenzhen, 518055, China}
      
  \icmlcorrespondingauthor{Yong Liu}{liuyonggsai@ruc.edu.cn}

  \icmlkeywords{Machine Learning, ICML}

  \vskip 0.3in
]



\printAffiliationsAndNotice{\icmlEqualContribution}

\begin{abstract}
Neural scaling laws govern the prediction power-law improvement of test loss with respect to model capacity ($N$), datasize ($D$), and compute ($C$). However, existing theoretical explanations often rely on specific architectures or complex kernel methods, lacking intuitive universality. In this paper, we propose a unified framework that abstracts general learning tasks as the progressive coverage of patterns from a long-tail (Zipfian) distribution. We introduce the \textbf{Effective Frontier} ($k_\star$), a threshold in the pattern rank space that separates learned knowledge from the unlearned tail. We prove that reducible loss is asymptotically determined by the probability mass of the tail a resource-dependent \textbf{frontier truncation}. Based on our framework, we derive the precise scaling laws for $N$, $D$, and $C$, attributing them to \textbf{capacity}, \textbf{coverage}, and \textbf{optimization} bottlenecks, respectively. Furthermore, we unify these mechanisms via a \textbf{Max-Bottleneck} principle, demonstrating that the Kaplan and Chinchilla scaling laws are not contradictory, but equilibrium solutions to the same constrained optimization problem under different active bottlenecks.
\end{abstract}

\section{Introduction}
\label{sec:introduction}

Neural scaling laws have established themselves as the foundational empirical constants of modern machine learning. In domains ranging from language modeling \citep{brown2020languagemodelsfewshotlearners, kaplan2020scalinglawsneurallanguage} and computer vision \citep{zhai2022scalingvisiontransformers} to multimodal generation \citep{henighan2020scalinglawsautoregressivegenerative}, test loss reliably follows precise power-law trajectories with respect to key resources: model size $N$, dataset size $D$, and training compute $C$~\citep{hestness2017deeplearningscalingpredictable}. These scaling laws serve as the guiding compass for resource allocation in the training of foundation models.

However, despite the \textbf{\textit{empirical universality}} of these laws across diverse architectures and datasets, their \textbf{\textit{theoretical principles}} remain elusive. 
Currently, these laws are often treated as observational constants rather than derived mathematical necessities.
This limited understanding restricts their prediction power, most notably leaving us unable to definitively reconcile seemingly divergent regimes, such as the parameter-centric scaling prescribed by \citet{kaplan2020scalinglawsneurallanguage} (Kaplan) versus the data-centric scaling later demonstrated by \citet{hoffmann2022trainingcomputeoptimallargelanguage} (Chinchilla).
This motivates us to establish a unified framework, theoretically deriving the scaling laws across various resource constraints.



Theoretical efforts to explain scaling laws have largely fragmented into single perspectives, each capturing only part of the picture. One line of work relies on \textit{solvable surrogate models}, such as kernel methods \citep{jacot2020neuraltangentkernelconvergence, bordelon2021spectrumdependentlearningcurves, maloney2022solvablemodelneuralscaling, Bahri_2024}. While mathematically rigorous, these are often restricted to specific architectural limits and lack universality. Another perspective focuses on \textit{spectral analysis}, relating scaling to the intrinsic dimension of manifolds \citep{sharma2020neuralscalinglawdimension, Spigler_2020}. However, these continuous methods often overlook the discrete, heavy-tailed nature (e.g., Zipfian laws) inherent to natural data~\citep{pan2025understandingllmbehaviorscompression}. Beyond static spectral limits, another direction analyzes \textit{optimization dynamics} \citep{bordelon2022learningcurvessgdstructured, arous2025learningquadraticneuralnetworks, ren2025emergencescalinglawssgd, luo2025multipowerlawlosscurve, li2025functionalscalinglawskernel}, but typically assumes infinite capacity or data, failing to account for the interplay between static resource limits ($N, D$) and dynamic learning processes ($C$). Crucially, a unified framework that derives the scaling laws for all three bottlenecks ($N, D, C$) and reconciles their conflicting trade-offs is currently lacking.

In this work, we view this challenge as an analogue of Hilbert's sixth problem: the derivation of macroscopic thermodynamics from microscopic statistical mechanics~\citep{deng2025hilbertssixthproblemderivation}. We abstract away architectural details and model the task as the progressive harvesting of information from a long-tailed distribution. We propose the concept of an \textbf{Effective Frontier} ($k_\star$), a sharp boundary in the pattern rank space that separates learned knowledge from the unlearned tail~\citep{pan2025understandingllmbehaviorscompression}. We show that scaling laws are simply the manifestation of this frontier advancing into the heavy-tailed (Zipfian) data distribution under resource constraints. Our main contributions are summarized as follows:

\begin{itemize}
    \item \textbf{Unification via Effective Frontier:} 
    We propose a unified framework that conceptualizes learning as the progressive advancement of an Effective Frontier $k_\star$ in the rank space. Specifically, it reveals that the scaling laws associated with distinct resource bottlenecks, such as model capacity ($N$), datasize ($D$) and compute ($C$), are fundamentally governed by the specific rate at which this frontier advances into the heavy tail of the data distribution (Section~\ref{sec:problem_setup}$\sim$\ref{sec:frontier_principle}).
    \item \textbf{Derivation of Scaling Laws:} We analytically derive the precise scaling exponents for $N$, $D$, and $C$ by characterizing their distinct bottleneck mechanisms: static bottlenecks governed by memory capacity (Section~\ref{subsec:model_scaling}) and statistical coverage (Section~\ref{subsec:data_scaling}), dynamic constraints arising from the optimization dynamics and implicit bias of gradient descent (Section~\ref{sec:compute_scaling}).
    This derivation serves as a unified basis connecting the statistical properties of data to diverse physical constraints.
    \item \textbf{Reconciling Optimal Frontiers:} By formulating the joint loss as a ``Max-Bottleneck'' problem, $\Delta L \asymp \max(\varepsilon_N, \varepsilon_D, \varepsilon_\tau)$, we resolve the apparent conflict between the Kaplan and Chinchilla scaling laws~\citep{kaplan2020scalinglawsneurallanguage, hoffmann2022trainingcomputeoptimallargelanguage}. We show they are not contradictory, but represent equilibrium solutions to the same constrained optimization problem under different active bottlenecks (Section~\ref{sec:joint_scaling}).
\end{itemize}


\section{Problem Setup}
\label{sec:problem_setup}

\textbf{Notations.}
Throughout this paper, we use notations $o$ and $\asymp$ to depict the asymptotic behavior as $R \to \infty$. Specifically, $g(R)=o(f(R))$ indicates that $\lim_{R \to \infty} g(R)/f(R) = 0$. We simply write $g(R)=o_R(1)$ when $f(R)=1$, which denotes a quantity that tends to zero as $R \to \infty$. Besides, $g(R) \asymp f(R)$ means that there exist two positive constants $c_1, c_2$ such that $c_1 f(R) \leq g(R) \leq c_2 f(R)$, and $f(R) \propto g(R)$ denotes that there exists positive constant $c$, such that $f(R) = cg(R)$. All these constants are independent of $R$.


\textbf{Additive Decomposition.}
Denote by $L$ the total test loss, the \textit{reducible loss} is defined as $\Delta L := L - E$, where $E$ is the irreducible entropy (Bayes error) of the data distribution. Intuitively, $\Delta L$ represents the performance gap that can be bridged by learning resources. To derive universal scaling laws, a crucial and necessary step is to mathematically model the reducible loss. On one hand, we aim to incorporate the main factors affecting the reducible loss. On the other hand, to facilitate theoretical analysis, we hope the relationships among them remain as simple as possible. Therefore, in this paper, we propose that the learnable structure of the data distribution can be decomposed into a set of discrete and learnable units, which we term as atomic patterns. In other words, we assume that the complexity of any real-world predictive task arises from the composition of countably infinite underlying structures. Motivated by this, we introduce the following definition.

\begin{assumption}[Additive Pattern Model]
\label{assump:decomposition}
Suppose that the learning task factorizes into independent sub-problems. Consequently, the scalar reducible loss $\Delta L$ is exactly represented as the expectation of pattern-wise residuals:
\begin{equation}
    \label{eq:loss_decomposition}
    \Delta L = \sum_{k=1}^{\infty} p_k q_k,
\end{equation}
where $k$ indexes the distinct atomic patterns. $p_k \ge 0$ with $\sum p_k = 1$ represents the statistical frequency of pattern $k$, and $q_k \in [0, 1]$ is the normalized residual (excess risk) for pattern $k$. Here, $q_k=1$ implies the pattern is completely unlearned, while $q_k \to 0$ implies mastery.
\end{assumption}

Strictly speaking, weak inter-pattern correlations may exist in finite regimes. Nevertheless, Equation~\eqref{eq:loss_decomposition} can be viewed as the asymptotic limit applicable to common cases, where such correlations are neglected. More importantly, Equation~\eqref{eq:loss_decomposition} linearizes the macroscopic loss into independent microscopic components, allowing us to analyze the behavior of $q_k$ under different resource constraints separately.

\textbf{Zipf Tail.}
The observation of scaling laws necessitates a specific structural form for the weights $\{p_k\}_{k=1}^\infty$. An exponential decay would result in nearly immediate saturation of learning, which is not consistent with observations in practices. Rather, the power-law scaling of the loss originates in the fundamental power-law distribution of the data. This relationship is formalized by the Zipfian distribution.

\begin{assumption}[Zipfian Distribution]
\label{assump:zipf}
Suppose that the patterns are sorted by importance (i.e., $p_1 \ge p_2 \ge \dots$) and they follow the Zipfian distribution given by
\begin{equation}
    \label{eq:zipf_law}
    p_k = \frac{1}{Z(\alpha)} k^{-\alpha},
\end{equation}
where $\alpha>1$ is the decay exponent and $Z(\alpha)$ is the normalization constant defined as $Z(\alpha) = \sum_{k=1}^\infty k^{-\alpha}$.
\end{assumption}

\paragraph{Tail Behavior.}
The exponent $\alpha$ characterizes the heaviness of the tail. Specifically, in the limit $\alpha \to 1^+$, the distribution approaches an extremely heavy-tailed regime where the aggregate probability mass contained within infinitely many rare patterns remains significant. Thus, this inherent statistical structure underpins our unified framework, where the observed scaling phenomenon is metaphorically the process of “harvesting” this dispersed mass by systematically pushing a coverage frontier deeper into the tail.

\begin{figure*}[t]
   \centering
   \includegraphics[width=0.98\textwidth]{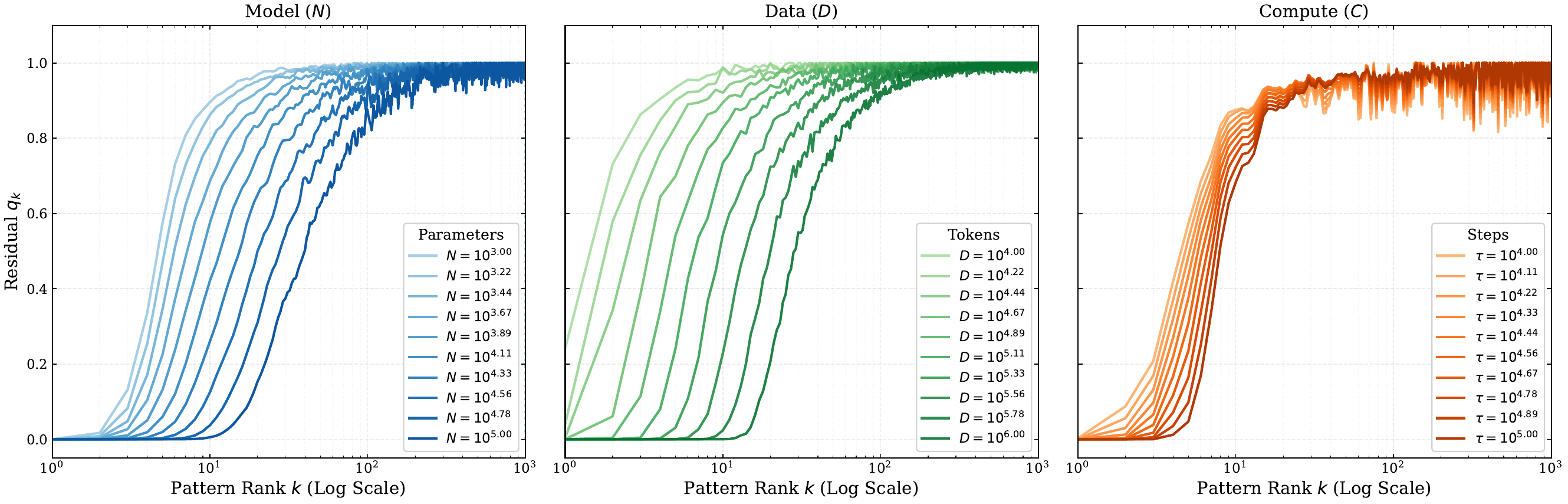}
     \caption{\textbf{The Effective Frontier in Rank Space.} The residual profile $q_k$ vs. pattern rank $k$ (log scale) under varying constraints: (Left) Model Capacity $N$, (Middle) Dataset Size $D$, and (Right) Compute $\tau$ (noting that $C \propto \tau$ for a fixed model configuration). In all cases, increasing resources pushes a sharp coverage frontier $k_\star$ deeper into the Zipfian tail ($\alpha=1.5$). This validates the geometric abstraction (Definition~\ref{def:effective_frontier}) where learning is viewed as a progressive coverage process.}
\label{fig:rank_space_exp} 
\end{figure*}
\section{Effective Frontier}
\label{sec:frontier_principle}
In this section, we bridge the atomic pattern decomposition in Assumption~\ref{assump:decomposition} to a resource-dependent geometric picture in rank space.

Let $R$ denote a scalar resource budget representing model capacity $N$, datasize $D$, or compute $\tau$. The training outcome under budget $R$ is characterized by a \emph{pattern-wise residual profile} $\{q_k(R)\}_{k\ge 1}$, where $q_k(R)\in[0,1]$ quantifies the normalized excess risk of pattern $k$.
We regard $q_k(\cdot)$ as a function of resources $R$. Naturally, additional resources should not degrade performance, implying that $q_k(R)$ is monotonically non-increasing in $R$.

To translate this profile into a one-dimensional geometry, we invoke the greedy learning bias, which suggests that frequently occurring patterns are learned preferentially~\citep{gong2025disentanglingfeaturestructuremathematically}.
\begin{assumption}[Greedy Learning Bias]
    \label{assump:rank_monotone}
    The residual profile is monotonically non-decreasing with respect to the frequency rank $k$, \textit{i.e.}, for all $k \geq 1$, $q_k(R) \le q_{k+1}(R).$
\end{assumption}

With a fixed tolerance $\delta \in (0, 1/2)$, we define the transition boundaries of the learned region as the last \textit{mostly learned} index $k_{-}(R) := \sup\{k: q_k \le \delta\}$ and the first \textit{mostly unlearned} index $k_{+}(R) := \inf\{k: q_k \ge 1-\delta\}$.
Assumption~\ref{assump:rank_monotone} ensures $k_{-}(R) \le k_{+}(R)$, preventing learned and unlearned patterns from interleaving.
We now use these boundaries to formalize the concept of a effective frontier.

\begin{definition}[Effective Frontier]\label{def:effective_frontier}
    We define $k_\star(R)$ as a sharp effective frontier if it asymptotically characterizes the transition boundaries:
    \begin{equation*}
        \label{eq:frontier_sharp}
        \begin{aligned}
        k_{-}(R) &= \bigl(1-o_R(1)\bigr)\,k_\star(R),\\
        k_{+}(R) &= \bigl(1+o_R(1)\bigr)\,k_\star(R).
        \end{aligned}
    \end{equation*}
\end{definition}

Definition~\ref{def:effective_frontier} implies that the relative width of the transition window vanishes asymptotically, \textit{i.e.}, $(k_+ - k_-)/k_\star \to 0$. 
Intuitively, $k_\star(R)$ represents the unique macroscopic scale where learning effectively ceases (see Figure~\ref{fig:rank_space_exp}). 
This justifies approximating the residual profile $q_k(R)$ as a macroscopic step function: $q_k \approx 0$ for $k \lesssim k_\star$ and $q_k \approx 1$ for $k \gtrsim k_\star$.

As illustrated in Figure~\ref{fig:rank_space}, by treating the learning process as a discrete cutoff in rank space, the reducible loss (Assumption~\ref{assump:decomposition}) reduces to the probability mass of the unlearned tail: $\Delta L(R) \asymp \sum_{k > k_\star(R)} p_k.$
This geometric reduction allows us to derive scaling laws solely by analyzing the tail behavior of the pattern distribution.
Building upon this, we present the \textit{Universal Scaling Principle}.

\begin{theorem}[Universal Scaling Principle]
    \label{thm:tail_sum}
    Under Assumption~\ref{assump:decomposition}$\sim$\ref{assump:zipf}, if a resource $R$ induces a effective frontier $k_\star(R)$ (Definition~\ref{def:effective_frontier}), the reducible loss scales as:
    \begin{equation*}
        \label{eq:tail_sum_integral}
        \Delta L(R) \asymp k_\star(R)^{-(\alpha-1)}.
    \end{equation*}
\end{theorem}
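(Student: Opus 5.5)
We prove matching upper and lower bounds on $\Delta L(R)=\sum_k p_k q_k(R)$. We use the two transition indices $k_-(R)$ and $k_+(R)$ from Definition~\ref{def:effective_frontier} together with monotonicity in $k$ (Assumption~\ref{assump:rank_monotone}). The underlying tool is the elementary Zipf tail estimate. For $T(m):=\sum_{k>m}p_k$ with $p_k=k^{-\alpha}/Z(\alpha)$, comparing the sum with $\int x^{-\alpha}\,dx$ gives
\begin{equation*}
\frac{(m+1)^{-(\alpha-1)}}{(\alpha-1)Z(\alpha)} \le T(m) \le \frac{m^{-(\alpha-1)}}{(\alpha-1)Z(\alpha)}, \qquad m\ge 1 .
\end{equation*}
Hence $T(m)\asymp m^{-(\alpha-1)}$ uniformly for $m\ge1$, because $(m+1)/m\le 2$. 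The same comparison bounds partial sums over a window, which is needed below.

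\textbf{Lower bound.} By Assumption~\ref{assump:rank_monotone}, every $k\ge k_+(R)$ satisfies $q_k(R)\ge 1-\delta$. Therefore
\begin{equation*}
\Delta L(R)\ge (1-\delta)\,T(k_+(R)-1)\gtrsim k_+(R)^{-(\alpha-1)}.
\end{equation*}
Since $k_+=(1+o_R(1))k_\star$, for $R$ large we have $k_+\le 2k_\star$. This yields $\Delta L(R)\ge c_1 k_\star(R)^{-(\alpha-1)}$ with $c_1$ depending only on $\alpha,\delta$.

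\textbf{Upper bound.} We split the sum at $k_-(R)$. For $k\le k_-$, monotonicity gives $q_k\le\delta$. For $k>k_-$ we use only $q_k\le 1$. This gives
\begin{equation*}
\Delta L(R)\le \delta\sum_{k\le k_-}p_k + T(k_-(R)).
\end{equation*}
The second term is $\lesssim k_-^{-(\alpha-1)}\lesssim k_\star^{-(\alpha-1)}$, since $k_-\ge k_\star/2$ for large $R$.

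The first term is the main obstacle. Under the bare definition it is only bounded by $\delta$, which is a constant and not $o(k_\star^{-(\alpha-1)})$. The sharp-frontier definition constrains where the transition happens, but with a fixed tolerance $\delta$ it does not say how small $q_k$ is deep inside the learned region. I would handle this in one of two ways, and state the choice explicitly.

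\begin{enumerate}
\item Read the definition in the intended step-function sense, as the paper does just before the theorem: $q_k\approx 0$ for $k\lesssim k_\star$. Concretely, require that the mass of the learned-region residuals $\sum_{k\le k_-}p_kq_k$ be $O(T(k_\star))$. This holds, for instance, if $q_k(R)\to0$ fast enough, or if we let $\delta=\delta(R)\to0$ with $\delta(R)\lesssim k_\star^{-(\alpha-1)}$. With this, the upper bound becomes $\Delta L\le c_2k_\star^{-(\alpha-1)}$.
\item Alternatively, prove the theorem for the quantity the paper actually uses, namely the tail-truncated loss $\sum_{k>k_-}p_kq_k$. Both bounds then go through cleanly. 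The transition window $(k_-,k_+)$ contributes at most $T(k_-)-T(k_+)=o(k_\star^{-(\alpha-1)})$, by the mean-value bound $\alpha\, k_-^{-\alpha}(k_+-k_-)$ divided by $Z(\alpha)$ and the vanishing relative width.
\end{enumerate}

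With either reading, the two bounds combine to give $\Delta L(R)\asymp k_\star(R)^{-(\alpha-1)}$.

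Finally, I would note the required growth condition $k_\star(R)\to\infty$ so that the $o_R(1)$ factors apply. The finitely many small $R$ values can be absorbed into the constants, or excluded, since $\asymp$ is asymptotic.
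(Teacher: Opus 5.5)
Your argument follows the paper's route. The paper also sandwiches $\Delta L$ between $(1-\delta)$ times the Zipf tail mass beyond $(1+\epsilon)k_\star$, and $\delta\sum_{k\le K_-}p_k$ plus the tail mass beyond $K_-=(1-\epsilon)k_\star$; it uses $(1\pm\epsilon)k_\star$ rather than your $k_\pm$, and then bounds the tail by the same integral test. Your treatment of the head term is more careful than the paper's, which drops $\delta\sum_{k\le K_-}p_k$ only by assuming $\delta$ can be taken arbitrarily small or ``scales with the loss'' (in effect your option (1)). Some such extra hypothesis is genuinely necessary: $q_k(R)=\delta/2$ for $k\le k_\star(R)$ and $q_k(R)=1$ otherwise satisfies every stated hypothesis (with $k_-=k_\star$, $k_+=k_\star+1$), yet $\Delta L(R)\to\delta/2$.
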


Theorem~\ref{thm:tail_sum} reduces the derivation of any concrete scaling law to a purely geometric problem: determining the \textbf{resource-to-frontier map} $R \mapsto k_\star(R)$ for the specific bottleneck (detailed proof in Appendix~\ref{app:proof_sec4}).
We identify three distinct regimes:
\begin{itemize}[leftmargin=*, itemsep=2pt, parsep=0pt]
    \item Model Scaling (model capacity $N$): capacity-limited frontier $k_\star(N)$.
    \item Data Scaling (datasize $D$): coverage-limited frontier $k_\star(D)$.
    \item Compute Scaling (compute budget $C$ or optimization steps $\tau$): optimization-limited frontier $k_\star(\tau)$, where $C \propto \tau$ for a fixed model configuration.
\end{itemize}


\begin{figure}[t!]
  \centering
  \includegraphics[width=0.98\linewidth]{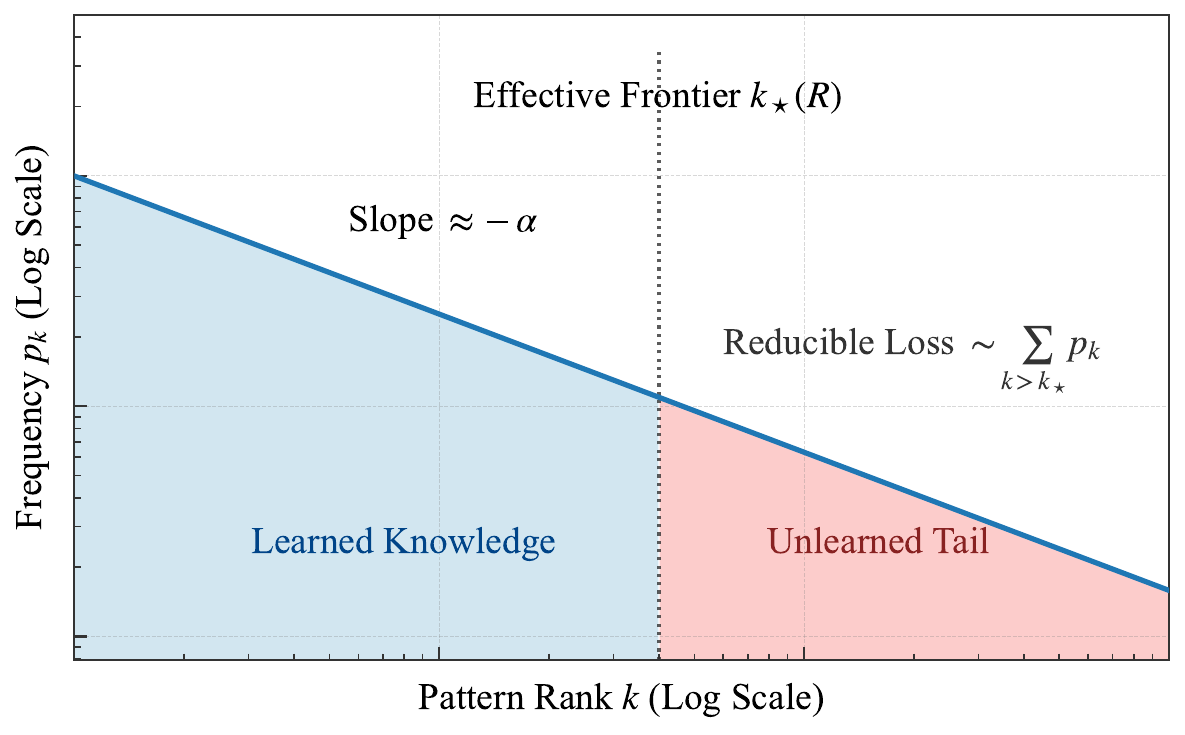}
  \caption{Effective frontier $k_\star(R)$ in rank space. Under Zipf frequencies $p_k\propto k^{-\alpha}$, resources $R$ (e.g., $N,D,\tau$) induce a cutoff: learned patterns ($k\le k_\star$) vs.\ unlearned tail ($k>k_\star$). The reducible loss is dominated by the tail sum $\sum_{k>k_\star(R)} p_k$, which decreases as $k_\star(R)$ shifts right with increasing resources.}
  \label{fig:rank_space}
\end{figure}

\section{Model Scaling: The Capacity Frontier}
\label{subsec:model_scaling}

In this section, we derive the Model Scaling Law.
We consider the regime where datasize $D$ and compute $C$ are abundant, rendering model capacity ($N$) the dominant bottleneck. 
In this limit, the effective capacity frontier $k_\star(N)$ and reducible loss are constrained by the model's \textit{Memory Capacity}, which represents the fundamental ability to store and resolve complex patterns from the heavy-tailed distribution.

To characterize the mapping $N \mapsto k_\star(N)$, we ground our analysis in the geometric concept of \textit{Effective Degrees of Freedom}.
Geometrically, learning $k$ orthogonal patterns imposes $k$ independent constraints. Resolving these constraints requires sufficient degrees of freedom, which are provided quantitatively by the parameter budget $N$.
This intuition is corroborated by recent theoretical and empirical studies, establishing that a deep network's capacity to perfectly fit random patterns scales linearly with the number of parameters  $N$~\citep{vershynin2020memory,zhang2021understanding}.


Building upon this understanding, we formalize the relationship between model size and the learnable frontier.
\begin{assumption}[Capacity Scaling] 
    \label{assump:capacity}
    The effective capacity frontier scales with the parameter count $N$ as $k_\star(N) \asymp N^{\gamma},$ where $\gamma \in (0, 1]$ is the architectural efficiency factor.
\end{assumption}
\begin{remark}
For an ideal linear model learning orthogonal features, capacity is strictly proportional to parameters ($k_\star \asymp N$ with $\gamma=1$).
In deep non-linear networks, this relationship is supported by the theory of \textbf{Memory Capacity}, which proves that the number of arbitrary patterns a ReLU network can perfectly interpolate scales linearly with its parameter count $N$ \cite{vershynin2020memory,yun2019small}. 
While practical factors such as architectural redundancy, parameter sharing or gradient-based learning inefficiencies may lead to sub-linear scaling ($\gamma < 1$) \cite{sharma2020neuralscalinglawdimension}.
Nevertheless, the fundamental dependency remains rooted in the above understandings, preserving the power-law form.
\end{remark}

In the following, we present the Model Scaling Law.
\begin{proposition}[Model Scaling Law]
\label{prop:model_scaling}
    Under Assumptions~\ref{assump:decomposition}$\sim$\ref{assump:zipf} and Assumption \ref{assump:capacity}, the reducible loss scales with model parameter $N$ as:
    \begin{equation*}
        \boxed{\Delta L(N) \asymp N^{-\gamma(\alpha-1)}.}
    \end{equation*}
\end{proposition}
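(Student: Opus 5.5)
The plan is to derive the Model Scaling Law as a direct composition of the Universal Scaling Principle (Theorem~\ref{thm:tail_sum}) with the resource-to-frontier map supplied by Assumption~\ref{assump:capacity}. Take the resource to be $R = N$. Assumption~\ref{assump:capacity} posits that $N$ induces an effective frontier $k_\star(N)$ in the sense of Definition~\ref{def:effective_frontier}, with $k_\star(N)\asymp N^{\gamma}$. Since Assumptions~\ref{assump:decomposition}$\sim$\ref{assump:zipf} are also in force, Theorem~\ref{thm:tail_sum} applies and gives
\begin{equation*}
  c_1\, k_\star(N)^{-(\alpha-1)} \le \Delta L(N) \le c_2\, k_\star(N)^{-(\alpha-1)}
\end{equation*}
for constants $c_1,c_2>0$ independent of $N$, once $N$ is large.

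The second step transfers the two-sided bound through the power map. From $a_1 N^{\gamma} \le k_\star(N) \le a_2 N^{\gamma}$ and $\alpha-1>0$, the map $x\mapsto x^{-(\alpha-1)}$ is decreasing on $(0,\infty)$. Hence
\begin{equation*}
  a_2^{-(\alpha-1)} N^{-\gamma(\alpha-1)} \le k_\star(N)^{-(\alpha-1)} \le a_1^{-(\alpha-1)} N^{-\gamma(\alpha-1)} .
\end{equation*}
Chaining this with the bound from the first step gives $\Delta L(N)\asymp N^{-\gamma(\alpha-1)}$, with constants $c_1 a_2^{-(\alpha-1)}$ and $c_2 a_1^{-(\alpha-1)}$. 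These depend only on $\alpha$, $\delta$, $\gamma$ and the constants of Assumption~\ref{assump:capacity}, and not on $N$, as the notation $\asymp$ requires.

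The only real obstacle is interpretive rather than computational. We must make sure that Assumption~\ref{assump:capacity} is read as asserting that $N$ actually induces a \emph{sharp} effective frontier, satisfying Definition~\ref{def:effective_frontier} and Assumption~\ref{assump:rank_monotone}, and not merely some scale growing like $N^\gamma$. Otherwise the hypothesis of Theorem~\ref{thm:tail_sum} is not met. I will state this reading explicitly: the capacity-limited frontier is the $k_\star$ of Definition~\ref{def:effective_frontier} with $R=N$, with $D$ and $C$ taken abundant so that they impose no competing frontier.

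Granting that reading, the rest is the routine monotone power-map manipulation above. I would also note that the $o_N(1)$ terms in the sharpness condition are absorbed into the constants of Theorem~\ref{thm:tail_sum}, so no further asymptotic care is needed.
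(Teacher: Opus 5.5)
Your proposal is correct and follows the paper's intended argument. The paper gives no separate proof of this proposition: it obtains the law by substituting $k_\star(N)\asymp N^{\gamma}$ from Assumption~\ref{assump:capacity} into Theorem~\ref{thm:tail_sum}, which is exactly your composition with the decreasing power map, and your reading of the capacity frontier as a sharp frontier in the sense of Definition~\ref{def:effective_frontier} matches the paper. One caveat: your upper bound is only as rigorous as Theorem~\ref{thm:tail_sum}, whose proof never removes the non-decaying fixed-$\delta$ term $\delta\sum_{k\le K_-}p_k$ from Lemma~\ref{lem:sandwich}, so the claim that no further asymptotic care is needed rests on that unresolved step.
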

This proposition~\ref{prop:model_scaling} presents the asymptotic scaling limit constrained by model capacity.
Crucially, this result structurally disentangles the empirical exponent into two components: (a) the term $(\alpha-1)$ represents the \textit{Data Structure}, governed solely by the heavy-tailedness of the distribution; (b) the factor $\gamma$ captures the \textit{Architectural Efficiency}, quantifying the specific architecture's capability to utilize parameters for pattern learning.

\section{Data Scaling: The Coverage Frontier}
\label{subsec:data_scaling}
In this section, we derive the Data Scaling Law.
Specifically, we consider the regime where model capacity and compute are abundant, leaving finite data coverage as the dominant bottleneck.
We adopt a three-step theoretical approach: 
\textit{(i)} define a coverage-induced probabilistic residual proxy at the pattern level;
\textit{ (ii)} lift it to a dataset-level reducible loss proxy using the additive decomposition; 
and \textit{(iii)} derive the induced effective frontier in the rank space.

Let the training set contain $D$ samples drawn \textit{i.i.d.} from the underlying pattern distribution $\{p_k\}_{k\ge 1}$. 
For each pattern $k$ with frequency $p_k$, its occurrence count follows a binomial distribution $X_k \sim \mathrm{Binomial}(D,p_k)$.
Motivated by the observation that no learning rule can reliably reduce the residual error of a pattern physically absent from the training set, we introduce the following residual proxy as step \textit{(i)}.

\begin{definition}[Residual Proxy]
    \label{def:residual_proxy}
    We define the coverage-induced residual proxy of pattern $k$ as the probability of non-observation, \textit{i.e.}, $q_k(D) \triangleq \Pr[X_k=0] = (1-p_k)^D$. 
\end{definition}
Definition~\ref{def:residual_proxy} implicitly assumes that a single occurrence is sufficient for learning in the infinite-compute regime.We extend this formulation in Definition~\ref{def:m_residual_proxy} by introducing a generalized threshold $m$, requiring at least $m$ occurrences for effective learning; in finite-data settings this naturally connects to multi-epoch data reuse. \citep{yan2025largerdatasetsrepeatedmore}

Proceeding to step \textit{(ii)} and \textit{(iii)}, we aggregate these pattern-level residuals to derive the dataset-level scaling behavior.
\begin{theorem}[Data Scaling Law]
    \label{thm:data_scaling}
    Under Assumption~\ref{assump:decomposition}$\sim$\ref{assump:zipf} and Definition~\ref{def:residual_proxy}, the effective coverage frontier satisfies
    \begin{equation*}
        k_\star(D) \asymp D^{1/\alpha}.
    \end{equation*}
    Consequently, the reducible loss scales with datasize $D$ as
    \begin{equation*}
        \boxed{
        \Delta L(D) \asymp D^{-\frac{\alpha-1}{\alpha}}.}
        \end{equation*}
\end{theorem}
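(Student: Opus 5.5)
Our plan is to compute $\Delta L(D)=\sum_k p_k(1-p_k)^D$ directly and to identify the frontier along the way, rather than appeal to Definition~\ref{def:effective_frontier} from the start. The key observation is that $(1-p_k)^D$ is of order $\exp(-Dp_k)$, with $Dp_k = D k^{-\alpha}/Z(\alpha)$. The natural candidate scale is therefore the index at which $Dp_k\asymp 1$, namely $k_\star(D):=(D/Z(\alpha))^{1/\alpha}\asymp D^{1/\alpha}$. The residual profile $q_k(D)=(1-p_k)^D$ is non-decreasing in $k$ because $p_k$ is non-increasing, so Assumption~\ref{assump:rank_monotone} holds automatically.

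For the loss, we split the sum at $k_\star$. On the tail $k>k_\star$ we have $Dp_k\le 1$. We also have $p_k\le p_{k_\star}\to 0$, so $(1-p_k)^D\ge (1-p_k)^{1/p_k}\ge c>0$ for large $D$. This gives the lower bound $\Delta L\ge c\sum_{k>k_\star}p_k\asymp k_\star^{-(\alpha-1)}$, using the integral comparison $\sum_{k>K}k^{-\alpha}\asymp K^{1-\alpha}/(\alpha-1)$ from the proof of Theorem~\ref{thm:tail_sum}. For the upper bound, the tail contributes at most $\sum_{k>k_\star}p_k\asymp k_\star^{-(\alpha-1)}$. The head contributes at most $\sum_{k\le k_\star}p_k e^{-Dp_k}$, via $1-x\le e^{-x}$. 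Substituting $u=k/k_\star$ turns this into a Riemann sum:
\[
\sum_{k\le k_\star} \frac{k^{-\alpha}}{Z(\alpha)} e^{-(k/k_\star)^{-\alpha}} \le C\,k_\star^{1-\alpha}\int_0^1 u^{-\alpha}e^{-u^{-\alpha}}\,du ,
\]
and the integral is finite because the exponential kills the singularity at $0$. Hence $\Delta L(D)\asymp k_\star^{-(\alpha-1)}\asymp D^{-(\alpha-1)/\alpha}$.

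To justify calling $k_\star$ the frontier, we check the window: $q_k\le\delta$ iff $Dp_k\gtrsim\log(1/\delta)$, and $q_k\ge 1-\delta$ iff $Dp_k\lesssim \log\frac{1}{1-\delta}$. Thus $k_\pm$ are constant multiples of $D^{1/\alpha}$, namely $k_-\approx (\log(1/\delta))^{-1/\alpha}k_\star$ and $k_+\approx(\log\frac1{1-\delta})^{-1/\alpha}k_\star$. The main obstacle is that the transition window is therefore only of constant relative width, not $1\pm o(1)$ as Definition~\ref{def:effective_frontier} demands. We would handle this by proving the loss bound directly, as above, without using sharpness. The conclusion $k_\star\asymp D^{1/\alpha}$ should be read in the $\asymp$ sense, with $k_-,k_+,k_\star$ all of order $D^{1/\alpha}$. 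This is enough, since Theorem~\ref{thm:tail_sum}'s conclusion only needs order-of-magnitude control, and the two-sided bound has already been established independently.
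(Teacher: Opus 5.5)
Your proposal is correct, and it takes a genuinely different route from the paper.

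\textbf{The paper's route.} It first reduces to the surrogate $\sum_k p_k e^{-Dp_k}$, using $e^{-Dp}e^{-Dp^2}\le(1-p)^D\le e^{-Dp}$ and a head/tail split at $p_k=D^{-1/2}$. It then locates the frontier heuristically via $Dp_{k_\star}\asymp 1$ and concludes by invoking Theorem~\ref{thm:tail_sum}.

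\textbf{Your route.} You split the exact sum at $k_\star=(D/Z)^{1/\alpha}$. The lower bound comes from $(1-p_k)^{1/p_k}\ge c$ on the tail. The upper bound comes from the tail mass plus a direct estimate of the head.

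\textbf{What your route buys.} It is the more rigorous of the two here, for two reasons.
\begin{itemize}
\item As you note, $k_-\approx(\log(1/\delta))^{-1/\alpha}k_\star$ and $k_+\approx(\log(1/(1-\delta)))^{-1/\alpha}k_\star$. The window therefore has constant relative width, so the sharpness hypothesis of Definition~\ref{def:effective_frontier} fails. The paper's final appeal to Theorem~\ref{thm:tail_sum} is therefore not actually licensed.
\item The upper bound of Lemma~\ref{lem:sandwich} carries a non-vanishing term $\delta\sum_{k\le K_-}p_k$, which the paper simply drops. Your head estimate uses the exponential decay of $q_k$, not merely $q_k\le\delta$. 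That is exactly what makes the upper bound decay like $k_\star^{1-\alpha}$.
\end{itemize}

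\textbf{What the paper's reduction buys.} Its advantage is modularity. The surrogate $\sum_k p_k e^{-Dp_k}$ is the self-similar kernel of Section~\ref{sec:general_compute_scaling} with $g(u)=e^{-u}$ and $\beta=1$. The same change of variables would therefore give the exact asymptotic constant $\alpha^{-1}Z^{-1/\alpha}\Gamma(1-1/\alpha)$, not just an $\asymp$ statement.

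\textbf{A small simplification.} Your Riemann-sum comparison for the head works: the integrand $u^{-\alpha}e^{-u^{-\alpha}}$ is increasing on $(0,1]$, so a one-cell shift is absorbed into $C$. You can skip it, though, since $xe^{-x}\le e^{-1}$ gives the bound in one line:
$\sum_{k\le k_\star}p_k e^{-Dp_k}=D^{-1}\sum_{k\le k_\star}(Dp_k)e^{-Dp_k}\le e^{-1}k_\star/D=e^{-1}Z^{-1}k_\star^{1-\alpha}$.
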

Theorem~\ref{thm:data_scaling} presents the data scaling behavior, explicitly quantifying how the tail heaviness $\alpha$ governs the sample efficiency through the exponent $\alpha_D = (\alpha-1)/\alpha$.
In the heavy-tailed limit ($\alpha \to 1^+$), this exponent vanishes ($\alpha_D \to 0$), reflecting the inherent inefficiency of \textit{i.i.d.} sampling in covering the long tail of rare patterns. 
Conversely, a larger $\alpha$ (lighter tail) yields a larger $\alpha_D$, accelerating the rate at which additional data translates into effective coverage and loss reduction (see Figure~\ref{fig:data_efficiency}). Detailed proof is deferred to Appendix~\ref{app:proof_thm_data}.

While Definition~\ref{def:residual_proxy} provides a simple residual proxy, realistic learning often requires repeated exposure to distinguish signal from noise. To capture this, we introduce a generalized proxy requiring a minimum threshold of occurrences.
\begin{definition}[Generalized Residual Proxy]\label{def:m_residual_proxy}
    We define the generalized coverage-induced residual proxy of pattern $k$ as the probability that the pattern occurrence count $X_k$ falls below a threshold $m>1$, \textit{i.e.}, $q_k(D)\triangleq \Pr[X_k<m]$. 
\end{definition}

\begin{corollary}
\label{cor:data_scaling_m}
    Under Assumptions~\ref{assump:decomposition}$\sim$\ref{assump:zipf} and Definition~\ref{def:m_residual_proxy}, the effective coverage frontier satisfies
    \begin{equation*}
        k_\star(D) \;\asymp\; {\frac{D}{Zm}}^{1/\alpha}.
    \end{equation*}
    Consequently, the reducible loss scales with datasize $D$ as
    \begin{equation*}
        \boxed{\Delta L(D) \asymp D^{-\frac{\alpha-1}{\alpha}}.}
    \end{equation*}
\end{corollary}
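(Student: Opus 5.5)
The plan is to mirror the proof of Theorem~\ref{thm:data_scaling}. We locate the transition window of the profile $q_k(D)=\Pr[X_k<m]$ with $X_k\sim\mathrm{Binomial}(D,p_k)$, show that it fits inside a frontier $k_\star(D)\asymp (D/(Zm))^{1/\alpha}$, and then apply Theorem~\ref{thm:tail_sum}. Write $\lambda_k:=Dp_k=Dk^{-\alpha}/Z$ for the expected count. Since $p_k$ is non-increasing in $k$ and $\Pr[X<m]$ is non-increasing in the binomial success probability (stochastic ordering), $q_k(D)$ is non-decreasing in $k$. So Assumption~\ref{assump:rank_monotone} holds automatically, and $k_-(D)$ and $k_+(D)$ are well defined. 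The whole problem therefore reduces to finding two thresholds $c_1\le c_2$ on $\lambda$: we need $q_k\le\delta$ when $\lambda_k\ge c_2 m$, and $q_k\ge 1-\delta$ when $\lambda_k\le c_1 m$.

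\textbf{Upper region (learned).} If $\lambda_k\ge c_2 m$ with $c_2>1$, the multiplicative Chernoff bound gives
\[
\Pr[X_k<m]\le \exp\bigl(-\lambda_k(1-1/c_2)^2/2\bigr)\le \exp\bigl(-c_2m(1-1/c_2)^2/2\bigr).
\]
For a fixed constant $c_2=c_2(\delta)$ this is at most $\delta$, since $m\ge 1$.

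\textbf{Lower region (unlearned).} If $\lambda_k\le c_1 m$, Markov's inequality gives $\Pr[X_k\ge m]\le \lambda_k/m\le c_1$. Taking $c_1=\delta$ yields $q_k\ge 1-\delta$.

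\textbf{Frontier bounds.} Solving $\lambda_k=c\,m$ for $k$ gives $k=(D/(Zcm))^{1/\alpha}$. Hence
\[
(D/(Zc_2m))^{1/\alpha}\lesssim k_-(D)\le k_+(D)\lesssim (D/(Z\delta m))^{1/\alpha}.
\]
Setting $k_\star(D):=(D/(Zm))^{1/\alpha}$, both $k_-$ and $k_+$ lie within constant factors of $k_\star$, which gives the claimed $k_\star(D)\asymp (D/(Zm))^{1/\alpha}$. Integer rounding only matters when $D$ is small, and we ignore it since we work with $D\to\infty$ and $m$ fixed.

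\textbf{Main obstacle.} Definition~\ref{def:effective_frontier} requires a sharp frontier, $k_\pm=(1\pm o(1))k_\star$. For fixed $m$, however, the window in $\lambda$ has constant relative width, so the window in $k$ also has constant relative width, and sharpness genuinely fails. My plan is to bypass sharpness and rerun the argument of Theorem~\ref{thm:tail_sum} with constant-factor control. The only facts that argument needs are these: $q_k\ge 1-\delta$ for $k\ge k_+$, so $\Delta L\ge (1-\delta)\sum_{k\ge k_+}p_k\asymp k_+^{-(\alpha-1)}$; and $q_k\le 1$ everywhere together with $q_k\le\delta$ below $k_-$. For the upper bound, a cruder estimate is not enough here, because $k_-$ is only a constant factor below $k_\star$. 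Instead I will use the Chernoff tail directly, giving
\[
\sum_{k\le k_-}p_kq_k\le \sum_k p_k e^{-c\lambda_k}\mathbf 1[\lambda_k\ge c_2m].
\]
Substituting $u=\lambda_k$ via $k=(D/(Zu))^{1/\alpha}$ turns this sum into $k_\star^{-(\alpha-1)}$ times a convergent integral of $u^{-1/\alpha}e^{-cu}$. Adding $\sum_{k>k_-}p_k\asymp k_-^{-(\alpha-1)}$ then gives $\Delta L\lesssim k_\star^{-(\alpha-1)}$. Combining the two bounds yields $\Delta L(D)\asymp (D/(Zm))^{-(\alpha-1)/\alpha}\asymp D^{-(\alpha-1)/\alpha}$, where the last step absorbs $Z$ and $m$ into the constants because they do not depend on $D$.
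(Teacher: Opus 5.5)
Your proposal is correct, and at the decisive step it is more careful than the paper. The paper locates the transition the same way you do, using tail bounds in terms of $\lambda_k=Dp_k$: a Poisson-type bound $e^{-\lambda_k}\sum_{j<m}(2\lambda_k)^j/j!$ on the learned side, and the moment bound $\Pr[X_k\ge m]\le(e\lambda_k/m)^m$ on the unlearned side. It concludes $Dp_{k_\star}\asymp m$ and then simply invokes Theorem~\ref{thm:tail_sum}. It never checks sharpness in the sense of Definition~\ref{def:effective_frontier}.

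Your diagnosis that sharpness actually fails is right. For fixed $m$, $q_k$ is asymptotically $\Pr[\mathrm{Poisson}(\lambda_k)<m]$. Hence $k_+/k_-$ converges to $(\lambda^{(\delta)}/\lambda^{(1-\delta)})^{1/\alpha}>1$, where $\lambda^{(\delta)}$ solves $\Pr[\mathrm{Poisson}(\lambda)<m]=\delta$. The same happens for $m=1$ in Theorem~\ref{thm:data_scaling}.

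By rerunning the sandwich with only constant-factor control of $k_\pm$, you prove the loss scaling without relying on a hypothesis that is false here. You also read the frontier claim in the only sense in which it can hold, namely $k_\pm(D)\asymp (D/(Zm))^{1/\alpha}$. Summing the head directly with the Chernoff tail repairs a second weak point: the upper bound of Lemma~\ref{lem:sandwich} contains $\delta\sum_{k\le K_-}p_k\to\delta$, which does not decay, and the paper's proof of Theorem~\ref{thm:tail_sum} only waves this term away. Your Markov/Chernoff thresholds ($c_1=\delta$, $c_2=c_2(\delta)$) are also explicit and uniform in $m$. Finally, you derive the rank monotonicity needed for $k_\pm$ from stochastic ordering instead of taking it as an assumption.

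One step needs a small repair. The inequality $\sum_{k\le k_-}p_kq_k\le\sum_k p_ke^{-c\lambda_k}\mathbf{1}[\lambda_k\ge c_2m]$ is not justified as written. You only showed $k_-\gtrsim (D/(Zc_2m))^{1/\alpha}$, so indices $k\le k_-$ need not satisfy $\lambda_k\ge c_2m$. Split instead at $K_0=\lfloor (D/(Zc_2m))^{1/\alpha}\rfloor$. The Chernoff bound applies for $k\le K_0$, and $\sum_{k>K_0}p_k\asymp K_0^{-(\alpha-1)}\asymp k_\star^{-(\alpha-1)}$ covers everything else.

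Separately, $k\mapsto p_ke^{-c\lambda_k}$ is unimodal rather than monotone. The sum-to-integral comparison therefore costs one extra term of size at most $\max_{\lambda}\lambda e^{-c\lambda}/D=O(1/D)$, which is $o(D^{-(\alpha-1)/\alpha})$. Neither point changes your conclusion.
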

Corollary~\ref{cor:data_scaling_m} presents the universality of the data scaling law, demonstrating that the asymptotic exponent remains invariant to the minimum occurrence threshold $m$.
Detailed proof is deferred to Appendix~\ref{app:proof_cor_m}.

\begin{figure}[t]
    \centering
    \includegraphics[width=1.0\linewidth]{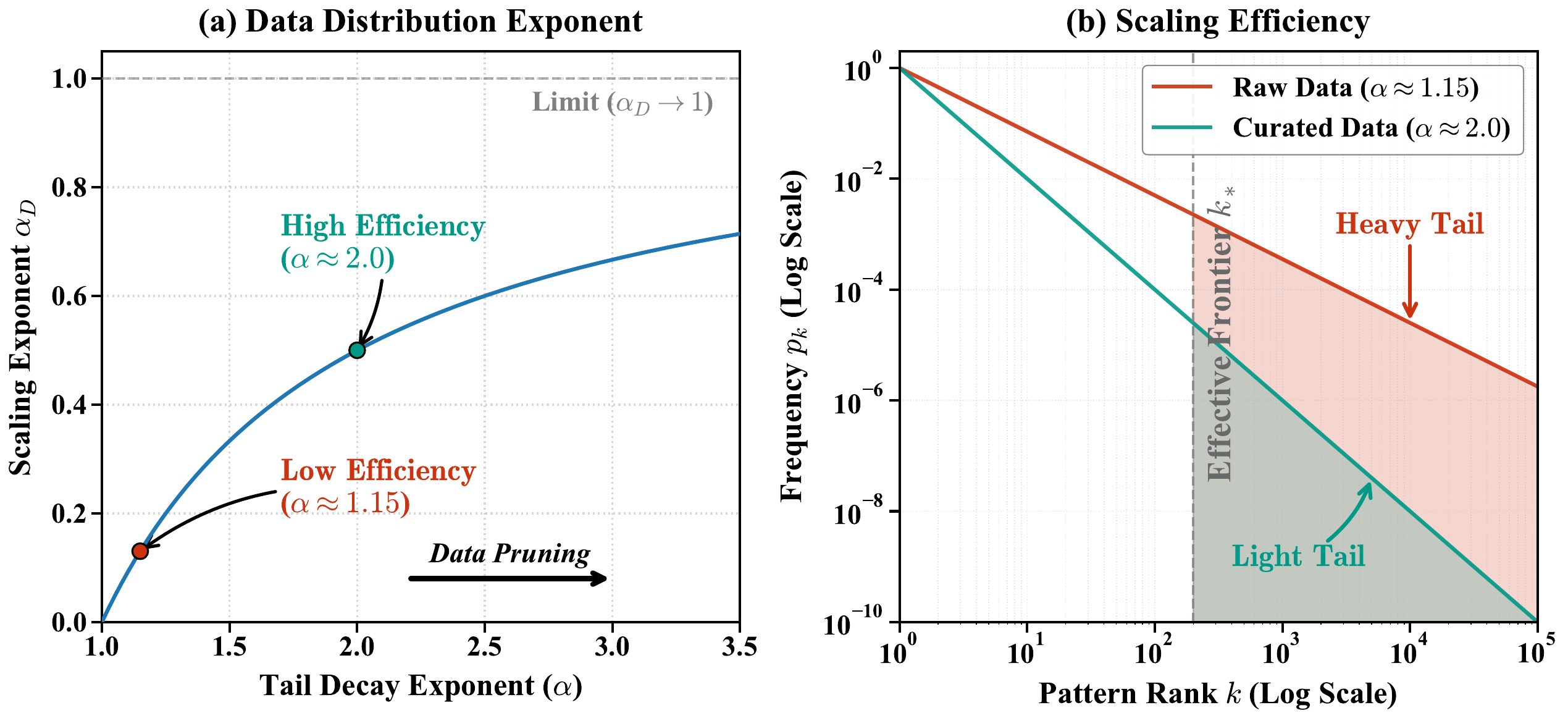}
\caption{\textbf{Tail Heaviness Governs Data Scaling Efficiency.}
\textbf{(a)} The data scaling exponent $\alpha_D$ is analytically determined by the tail index $\alpha$ (Theorem~\ref{thm:data_scaling}). Lighter tails (larger $\alpha$, e.g., via data pruning) yield strictly higher sample efficiency.
\textbf{(b)} Geometrically, for a fixed effective frontier $k_*$, heavy-tailed distributions (Red) retain significantly more unlearned probability mass (shaded tail) than curated distributions (Teal), resulting in slower loss reduction.}
\label{fig:data_scaling_efficiency}
\label{fig:data_efficiency}
\end{figure}

\section{Compute Scaling: The Optimization Frontier}
\label{sec:compute_scaling}
In this section, we derive the Compute Scaling Law. 
While model capacity ($N$) and datasize ($D$) impose static limits, compute ($C$) introduces a dynamic constraint on the learning process. Even with infinite capacity and data, reducible loss remains non-zero if the optimization has not converged for rare \textit{tail} patterns.
To establish this, we first derive the exact scaling exponent for Gradient Descent (GD) from optimization dynamics analysis (Section~\ref{sec:sgd_dynamics}).
Subsequently, we extend this result by modeling the residual evolution through a \textit{Self-Similar Scaling Kernel}, formalizing a general scaling law independent of specific optimizer (Section~\ref{sec:general_compute_scaling}).
\subsection{Optimization Dynamics and Scaling Law for GD}\label{sec:sgd_dynamics}
We adopt GD as a canonical example to ground our analysis of compute scaling. Specifically, we identify a dynamic effective frontier induced by the optimization process and derive the resulting compute scaling law in the GD regime. 
Before stating the formal theorem, we first establish the necessary assumptions and lemmas. 
We consider decoupled optimization trajectories of independent patterns (Assumption~\ref{assump:decomposition}).
Let $L_k(\theta)$ denote the loss component associated with pattern $k$, and $\nabla L_k(\theta)$ its gradient with respect to parameters $\theta$.
\begin{assumption}[Orthogonality and Polyak-Lojasiewicz (PL) Condition]
    \label{assump:PL_condition}
    We assume that the loss landscape satisfies the following geometric conditions:
    \begin{enumerate}[label=\textit{(\alph*)}, leftmargin=*]
        \item \textit{$\epsilon$-Orthogonality:} For any pair of distinct patterns $j \neq k$, the inner product of their gradients is bounded by a small constant $\epsilon \ge 0$: $|\langle \nabla L_j(\theta), \nabla L_k(\theta) \rangle| \le \epsilon$.
        \item \textit{Pattern-wise PL Condition:} For each pattern $k$, the gradient norm is lower-bounded by its residual $q_k$: $\|\nabla L_k(\theta)\|^2 \ge 2\lambda_k q_k(\theta)$, where $\lambda_k > 0$ denotes the pattern-specific PL constant.
        \item \textit{Pattern-wise Smoothness:} Loss component $L_k$ is $\beta_k$-smooth on the trajectory region, i.e. $\|\nabla L_k(\theta)-\nabla L_k(\theta')\|\le \beta_k \|\theta-\theta'\|$.

    \end{enumerate}
\end{assumption}
Assumption~\ref{assump:PL_condition} formalizes the geometry properties of the loss landscape. 
Condition (a) ensures that gradients are nearly orthogonal in the parameter space, implying that the learning trajectories of distinct patterns are effectively decoupled up to an $\epsilon$-error.
Condition (b) posits that each loss component satisfies the PL inequality, a standard assumption in non-convex optimization analysis~\citep{polyak1963gradient,lojasiewicz1963topological,karimi2016linear}. 
Here, $\lambda_k$ represents the pattern-specific PL constant, corresponding to the minimum eigenvalue of the block-diagonal Hessian within the local quadratic regime. 

Building on these conditions, we formalize the evolution of pattern-wise residual $q_k$ in the following Lemma~\ref{lemma:q_dynamic}.
\begin{lemma}[Iterative Residual Dynamics]
    \label{lemma:q_dynamic}
    Let $I_{t,k} \in \{0, 1\}$ be the indicator variable such that $I_{t,k}=1$ if pattern $k$ is sampled at step $t$, and $I_{t,k}=0$ otherwise. Under Assumption~\ref{assump:PL_condition}, the residual $q_k(t)$ evolves as
    \[
    q_k(t+1) = \begin{cases} (1-\eta \lambda_k)\,q_k(t) & \text{if } I_{t,k}=1, \\ q_k(t) & \text{if } I_{t,k}=0, \end{cases}
    \]
    where $\eta$ denotes the (constant) learning rate and $\lambda_k$ is the pattern-specific PL constant.
\end{lemma}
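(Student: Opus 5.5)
The plan is to analyze a single update step $\theta_{t+1}=\theta_t-\eta\nabla L_{\mathcal B_t}(\theta_t)$, where the gradient is the sum of the sampled pattern gradients $\sum_j I_{t,j}\nabla L_j(\theta_t)$, and to track how the loss component $L_k$ of a fixed pattern $k$ changes. We identify $q_k(t)$ with the (normalized) excess loss $L_k(\theta_t)-L_k^\star$. By pattern-wise smoothness (Assumption~\ref{assump:PL_condition}(c)), the descent lemma gives
\[
L_k(\theta_{t+1}) \le L_k(\theta_t) - \eta\Big\langle \nabla L_k(\theta_t), \textstyle\sum_j I_{t,j}\nabla L_j(\theta_t)\Big\rangle + \tfrac{\beta_k\eta^2}{2}\Big\|\textstyle\sum_j I_{t,j}\nabla L_j(\theta_t)\Big\|^2 .
\]
We then split into the two cases of the indicator $I_{t,k}$.

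\textbf{Case $I_{t,k}=1$.} The inner product decomposes into the self term $\|\nabla L_k\|^2$ plus cross terms. By $\epsilon$-orthogonality (a), the cross terms are bounded by $\epsilon$ times the batch size. The PL condition (b) lower-bounds the self term by $2\lambda_k q_k(t)$. For the quadratic term, we expand the norm, again absorbing cross terms via (a). We then pick $\eta\le 1/\beta_k$ so that $\eta\|\nabla L_k\|^2-\tfrac{\beta_k\eta^2}{2}\|\nabla L_k\|^2\ge \tfrac{\eta}{2}\|\nabla L_k\|^2\ge \eta\lambda_k q_k$. This yields $q_k(t+1)\le(1-\eta\lambda_k)q_k(t)+O(\eta\epsilon)$.

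\textbf{Case $I_{t,k}=0$.} Only other patterns' gradients move $\theta$. The first-order change in $L_k$ is then $-\eta\sum_{j}\langle\nabla L_k,\nabla L_j\rangle=O(\eta\epsilon)$, and the second-order term is $O(\eta^2)$. Thus $q_k(t+1)=q_k(t)+O(\eta\epsilon+\eta^2)$.

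The statement is written as an equality, i.e.\ in the idealized decoupled regime. We therefore invoke the independence of sub-problems in Assumption~\ref{assump:decomposition} (exact orthogonality, $\epsilon\to0$). We also use the local quadratic regime mentioned after Assumption~\ref{assump:PL_condition}, in which $L_k$ restricted to its own block is quadratic with curvature $\lambda_k$, so that the PL and descent inequalities become equalities to leading order. Concretely, in the quadratic block model $L_k=\tfrac{\lambda_k}{2}\|\theta^{(k)}-\theta^{(k)}_\star\|^2$-type dynamics, one GD step contracts the block residual by a fixed factor. Absorbing constants (e.g.\ defining the effective rate so that the factor reads $1-\eta\lambda_k$), we obtain the stated multiplicative recursion. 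Untouched blocks are exactly invariant, giving $q_k(t+1)=q_k(t)$.

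\textbf{Main obstacle.} The main difficulty is justifying equality rather than an inequality with $O(\eta\epsilon)$ errors. PL plus smoothness only gives the upper bound $(1-\eta\lambda_k(2-\beta_k\eta))$-type contraction. First, I would prove the two-sided inequality version, with explicit $\epsilon$ and $\eta^2$ error terms. I would then state that in the exactly orthogonal, locally quadratic limit (with $\beta_k$ matched to $\lambda_k$ after normalization) the bounds coincide. This recovers the lemma as the leading-order dynamics used in the subsequent compute-scaling analysis.
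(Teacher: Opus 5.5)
Your argument is essentially the paper's own derivation in Appendix~\ref{app:sgd_micro}. You split on whether pattern $k$ is sampled, use the descent lemma plus the pattern-wise PL inequality when it is, and use a first-order expansion plus (approximate) orthogonality when it is not; the stated equality then comes from idealizing and renaming constants, where the paper's $\eta'$ plays the role of your ``effective rate'' (in your isotropic block model the exact factor is $(1-\eta\lambda_k)^2$). Your version is more careful, with explicit $O(\eta\epsilon)$ and $O(\eta^2)$ errors and the choice $\eta\le 1/\beta_k$ giving exactly $(1-\eta\lambda_k)$ as an upper bound, but with a mini-batch the Case~$I_{t,k}=1$ quadratic term also contains $\tfrac{\beta_k\eta^2}{2}\|\nabla L_j\|^2$ for the other sampled $j$, so that case carries an $O(\eta^2)$ error too, which the paper's single-sample setting avoids.
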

Lemma~\ref{lemma:q_dynamic} indicates that the residual contracts geometrically only upon sampling.
We term $\lambda_k$ the \textit{effective error correction coefficient}, as it determines the per-step reduction rate.

\begin{remark}
More general learning-rate schedules $\eta=\eta(t)$ lead to the same “only contracts upon sampling” structure, but with a time-varying contraction factor; recent work studies functional scaling laws that explicitly incorporate learning-rate schedules. \citep{li2025functionalscalinglawskernel}
\end{remark}

Detailed proof is deferred to Appendix \ref{app:sgd_micro}.

Furthermore, we posit that $\lambda_k$ is not uniform across patterns and formalize this inhomogeneity based on theoretical insights under deep linear network, where the convergence rate intrinsically scales as a power of pattern frequency $p_k$.
\begin{assumption}[Inhomogeneity]
    \label{assump:spectral_power}
    The effective error correction coefficient $\lambda_k$ follows the power law~$\lambda_k \propto p_k^{\beta-1}$, where $\beta>0$ is the implicit bias exponent.
\end{assumption}
\textbf{Theoretical Justifications.}\quad
Assumption~\ref{assump:spectral_power} formalizes the inductive bias that frequent patterns are learned preferentially, manifesting as inhomogeneous dynamics. 
We derive this power law via gradient flow analysis on Deep Linear Networks (DLN) in Appendix~\ref{app:derivation_assumptions}. Specifically, we demonstrate that the bias exponent $\beta$ is determined by the network depth $L$ and the learnability exponent $\zeta$ (Assumption~\ref{assump:learnability}), obeying $\beta = 2 + \zeta(2 - 2/L)$.
Crucially, this result implies $\beta \ge 2$, placing the optimization in the \textit{rich} or \textit{feature learning} regime, where depth introduces a non-linear selectivity that actively accelerates the learning of dominant high-frequency features.
Our theoretical analysis further validates Lemma~\ref{lemma:q_dynamic} using the specific DLN architecture, without relying on PL condition in Assumption~\ref{assump:PL_condition}.

\begin{lemma}[Asymptotic Residual Dynamics]
    \label{lemma:q_dynamic_exp}
    Under Lemma~\ref{lemma:q_dynamic} and Assumption~\ref{assump:spectral_power}, with a large training step $\tau$ and a constant $c>0$, the residual $q_k(\tau)$ converges in probability to the exponential form:
    \begin{equation}
        \label{eq:exponential_decay}
        q_k(\tau) \asymp \exp\!\big(-c\,\tau\,p_k^{\beta}\big).
    \end{equation}
\end{lemma}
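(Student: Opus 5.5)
Starting from Lemma~\ref{lemma:q_dynamic}, I will unroll the recursion. Since the residual contracts only on steps where pattern $k$ is sampled, after $\tau$ steps
\[
q_k(\tau) = q_k(0)\,(1-\eta\lambda_k)^{S_k(\tau)}, \qquad S_k(\tau) := \sum_{t=0}^{\tau-1} I_{t,k}.
\]
Here $S_k(\tau)$ is the number of times pattern $k$ has been visited. I take the sampling to be i.i.d. from the pattern distribution, so $I_{t,k}\sim\mathrm{Bernoulli}(p_k)$ and $S_k(\tau)\sim\mathrm{Binomial}(\tau,p_k)$. I also take the initial residual to be order one ($q_k(0)\in[c_0,1]$, untrained patterns), so it enters only the $\asymp$ constants.

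Next I rewrite the residual in exponential form:
\[
\log q_k(\tau) = \log q_k(0) + S_k(\tau)\log(1-\eta\lambda_k).
\]
Assumption~\ref{assump:spectral_power} gives $\lambda_k\propto p_k^{\beta-1}$. For $\beta\ge 1$, which holds for the DLN-derived exponent $\beta\ge 2$, the coefficients $\lambda_k$ are uniformly bounded. With a stable step size ($\eta\lambda_k\le \eta\lambda_{\max}<1$) I can therefore use
\[
-\log(1-x)\asymp x \quad \text{for } x\in[0,\eta\lambda_{\max}],
\]
with constants depending only on $\eta\lambda_{\max}$. This gives $-\log(1-\eta\lambda_k)\asymp \eta\lambda_k \propto \eta\, p_k^{\beta-1}$.

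The probabilistic step replaces $S_k(\tau)$ by its mean $\tau p_k$. By the weak law of large numbers, or Chebyshev since $\mathrm{Var}\,S_k=\tau p_k(1-p_k)$, we get $S_k(\tau)/(\tau p_k)\to 1$ in probability whenever $\tau p_k\to\infty$. Combining the two estimates,
\[
-\log q_k(\tau) = \bigl(1+o_P(1)\bigr)\,\tau p_k\cdot \eta\lambda_k + O(1) \asymp c\,\tau\, p_k^{\beta},
\]
which is \eqref{eq:exponential_decay} with $c\propto \eta$ times the proportionality constant in Assumption~\ref{assump:spectral_power}.

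The main obstacle is the regime where $\tau p_k$ is not large, namely the tail patterns near or beyond the frontier, where concentration fails. The exponential form is still the right proxy there, and I would handle it through the exact expectation. Independence across steps gives
\[
\mathbb{E}\bigl[(1-\eta\lambda_k)^{S_k}\bigr] = \bigl(1-p_k\eta\lambda_k\bigr)^{\tau} = \exp\!\bigl(-\tau\, p_k\eta\lambda_k(1+O(p_k\eta\lambda_k))\bigr) \asymp \exp(-c\,\tau p_k^\beta).
\]
This holds uniformly in $k$, so in expectation the statement holds for all patterns. I would present it as two statements: the expectation-level $\asymp$, and the convergence in probability for $\tau p_k\to\infty$. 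In the complementary case $\tau p_k^\beta=O(1)$, both sides are order one and the $\asymp$ holds trivially up to constants, since $q_k\le 1$ and $q_k\ge q_k(0)(1-\eta\lambda_{\max})^{S_k}$ with $S_k$ tight. This case split is where the argument needs care to state precisely.
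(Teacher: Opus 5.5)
Your argument is sound at the level of precision the lemma is stated, but it takes a partly different route from the paper.

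\textbf{Comparison with the paper.} The paper's proof (the aggregation step in Appendix~\ref{app:sgd_micro}) works purely in expectation. Conditioning on the sampled index gives $\mathbb{E}[q_k^{(t+1)}\mid\theta_t]=(1-\eta\lambda_k p_k)\,q_k^{(t)}$ (with $2\eta'$ in place of $\eta$ there). The paper iterates this over $\tau$ steps, replaces $(1-x)^\tau$ by $e^{-\tau x}$, and substitutes $\lambda_k=\lambda_0p_k^{\beta-1}$. Your identity $\mathbb{E}[(1-\eta\lambda_k)^{S_k}]=(1-p_k\eta\lambda_k)^\tau$ is the same computation, done in one shot via the binomial generating function.

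The paper never passes from $\mathbb{E}\,q_k(\tau)$ to $q_k(\tau)$ itself, although the lemma says ``in probability''. Your pathwise unrolling $q_k(\tau)=q_k(0)(1-\eta\lambda_k)^{S_k(\tau)}$, combined with Chebyshev on $S_k$, is what actually supplies that. The paper's route is shorter and suffices for the expected loss, since $\mathbb{E}\,\Delta L=\sum_k p_k\,\mathbb{E}\,q_k$ by linearity. Yours controls the realized residuals.

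Your restriction to $\beta\ge 1$ is not an extra hypothesis. With constant $\eta$, Lemma~\ref{lemma:q_dynamic} keeps $q_k\in[0,1]$ only if $\eta\lambda_k\le 1$ for every $k$, and this fails for $\beta<1$.

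\textbf{First point of precision: what the pathwise bound gives.} What you prove pathwise is a statement about the exponent, $-\log q_k(\tau)\asymp\tau p_k^\beta$. Both $-\log(1-x)\asymp x$ and the factor $1+o_P(1)$ fix the exponent only up to constants, so this is not the multiplicative $\asymp$ of the paper's notation.

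For fixed $k$ the multiplicative version cannot hold. With $\ell_k:=-\log(1-\eta\lambda_k)$, the fluctuation $\ell_k(S_k-\tau p_k)$ has size $\ell_k\sqrt{\tau p_k}\to\infty$. Hence no $c$ makes $q_k(\tau)e^{c\tau p_k^\beta}$ tight and bounded away from zero. The paper's step $(1-x)^\tau\approx e^{-\tau x}$ has the same status, since $(1-x)^\tau e^{\tau x}=e^{-\Theta(\tau x^2)}$.

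It does hold, with $c=\eta\lambda_0$, in the region that matters for the frontier. Write
\[
\log\frac{q_k(\tau)}{q_k(0)}+\eta\lambda_k\tau p_k=-(S_k-\tau p_k)\,\ell_k-\tau p_k\,(\ell_k-\eta\lambda_k).
\]
The first term has standard deviation $\asymp\sqrt{\tau p_k^\beta}\,p_k^{(\beta-1)/2}$. The second is $O(\tau p_k^\beta\,p_k^{\beta-1})$. Both are $o_P(1)$ whenever $\tau p_k^\beta=O(1)$ and $\beta>1$, because such $k$ are forced to infinity as $\tau\to\infty$.

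\textbf{Second point of precision: the complementary case.} Your complementary case is mislabeled. The complement of $\tau p_k\to\infty$ is $\tau p_k=O(1)$, not $\tau p_k^\beta=O(1)$.

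In the band where $\tau p_k\to\infty$ but $\tau p_k^\beta=O(1)$, $S_k$ is not tight. There your bound $q_k\ge q_k(0)(1-\eta\lambda_{\max})^{S_k}$ degenerates to $e^{-\Theta(\tau p_k)}$. Use $\lambda_k$ instead of $\lambda_{\max}$ and apply Markov to $S_k\ell_k$, whose mean is $\tau p_k\ell_k\asymp\tau p_k^\beta=O(1)$. Alternatively, note that your concentration step already covers this band.
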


\begin{theorem}[Compute Scaling Law for GD]
    \label{thm:sgd_scaling}
    Under Assumptions~\ref{assump:decomposition}$\sim$\ref{assump:zipf} and Lemma~\ref{lemma:q_dynamic_exp}, the dynamic effective frontier $k_\star(\tau)$ satisfies
    \begin{equation*}
        \label{eq:frontier_scaling}
        k_\star(\tau) \asymp \tau^{\frac{1}{\alpha\beta}}.
    \end{equation*}
    Consequently, the reducible loss scales as
    \begin{equation*}
        \label{eq:sgd_scaling_result}
        \boxed{\Delta L(\tau) \asymp \tau^{-\frac{\alpha-1}{\alpha\beta}}.}
    \end{equation*}
\end{theorem}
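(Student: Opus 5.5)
The plan mirrors the proof of Theorem~\ref{thm:data_scaling}: first locate a sharp effective frontier from the residual profile of Lemma~\ref{lemma:q_dynamic_exp}, then apply Theorem~\ref{thm:tail_sum}.

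\textbf{Step 1: the frontier location.} Substituting the Zipf law of Assumption~\ref{assump:zipf} into \eqref{eq:exponential_decay} gives
\[
q_k(\tau)\asymp \exp\!\bigl(-c'\,\tau\,k^{-\alpha\beta}\bigr), \qquad c'=c\,Z(\alpha)^{-\beta}.
\]
The exponent is of order one exactly when $k\asymp \tau^{1/(\alpha\beta)}$. This suggests the candidate frontier
\[
k_\star(\tau):=(c'\tau)^{1/(\alpha\beta)},
\]
which makes $q_k$ a function of the single ratio $x=k/k_\star(\tau)$. Writing $\phi(x)=\exp(-x^{-\alpha\beta})$, the profile is $\phi(k/k_\star)$, up to the constants hidden in $\asymp$. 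The profile is increasing in $k$, so Assumption~\ref{assump:rank_monotone} is automatically consistent.

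\textbf{Step 2: checking sharpness.} This is the main obstacle. A pure scaling form $\phi(k/k_\star)$ has a transition window of width proportional to $k_\star$, so $k_-$ and $k_+$ are only within constant factors of $k_\star$. Definition~\ref{def:effective_frontier} asks for $1\pm o(1)$, which this form does not deliver.

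I would first handle this the way the data-scaling proof presumably did. I would note that Theorem~\ref{thm:tail_sum} only needs two estimates:
\begin{itemize}
\item[(i)] $q_k$ is bounded away from $1$ (say $\le\delta$) for $k\le a\,k_\star$;
\item[(ii)] $q_k\ge 1-\delta$ for $k\ge b\,k_\star$,
\end{itemize}
with constants $0<a<b$. Solving $\phi(x)=\delta$ and $\phi(x)=1-\delta$ gives
\[
a=\bigl(\log(1/\delta)\bigr)^{-1/(\alpha\beta)}, \qquad b=\bigl(-\log(1-\delta)\bigr)^{-1/(\alpha\beta)}.
\]
So $k_\pm\asymp \tau^{1/(\alpha\beta)}$, which is the claimed frontier scaling in the $\asymp$ sense. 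Alternatively, if the frontier is defined through $k_\pm$ directly, the same computation yields $k_\star(\tau)\asymp\tau^{1/(\alpha\beta)}$.

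\textbf{Step 3: the loss bounds.} I would bound $\Delta L=\sum_k p_k q_k$ directly rather than relying on sharpness.

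For the lower bound, $q_k\ge 1-\delta$ for $k\ge b k_\star$, so
\[
\Delta L\ \ge\ (1-\delta)\sum_{k\ge b k_\star}p_k \ \asymp\ k_\star^{-(\alpha-1)},
\]
by the integral comparison $\sum_{k>K}k^{-\alpha}\asymp K^{1-\alpha}/(\alpha-1)$.

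For the upper bound, split the sum at $k_\star$. The tail $k>k_\star$ contributes at most $\sum_{k>k_\star}p_k\asymp k_\star^{1-\alpha}$. The head requires the stretched-exponential decay of $q_k$ for $k\ll k_\star$. Changing variables $x=k/k_\star$ gives
\[
\sum_{k\le k_\star}k^{-\alpha}e^{-(k_\star/k)^{\alpha\beta}} \ \asymp\ k_\star^{1-\alpha}\int_0^1 x^{-\alpha}e^{-x^{-\alpha\beta}}\,dx.
\]
The integral converges because the exponential kills the $x^{-\alpha}$ singularity at $0$. Some care is needed with the $\asymp$ in \eqref{eq:exponential_decay}, which hides constants inside the exponent; these only rescale $k_\star$ by a constant factor.

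Combining the two bounds gives $\Delta L(\tau)\asymp k_\star(\tau)^{-(\alpha-1)}\asymp \tau^{-(\alpha-1)/(\alpha\beta)}$, consistent with Theorem~\ref{thm:tail_sum}.
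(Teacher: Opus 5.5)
Your proof is correct, but at the decisive step it takes a different and more careful route than the paper.

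\textbf{The paper's route.} The appendix for this theorem only derives the exponential residual form, i.e.\ the content of Lemma~\ref{lemma:q_dynamic_exp}. It does this from the expected one-step recurrence and the ansatz $\lambda_k=\lambda_0p_k^{\beta-1}$. The passage to the scaling law is the same device as in the data-scaling proof: $c\tau p_{k_\star}^\beta\asymp 1$, followed by Theorem~\ref{thm:tail_sum}. Part~II (Appendix~\ref{app:universality}) supplies an integral version via $u=A\tau k^{-\alpha\beta}$ and the Mellin transform of $g(u)=e^{-u}$.

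\textbf{Your route.} You bypass Theorem~\ref{thm:tail_sum}, and rightly so, for two reasons. First, the profile $\phi(k/k_\star)$ gives
$k_+/k_-\to\bigl(\log(1/\delta)/(-\log(1-\delta))\bigr)^{1/(\alpha\beta)}>1$, so Definition~\ref{def:effective_frontier} is not actually met. Second, the paper's upper bound in Theorem~\ref{thm:tail_sum} carries a term $\delta\sum_{k\le K_-}p_k\to\delta$ that never vanishes.

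\textbf{A correction to Step~2.} Your claim that estimates (i)--(ii) alone suffice is wrong for the upper bound: $q_k\le\delta$ on the head only yields $\Delta L\le\delta+O(k_\star^{1-\alpha})$. What rescues the argument is Step~3, where you control the head by the stretched-exponential decay itself.

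\textbf{Relation to the Mellin computation.} That head estimate is a discrete, rigorous version of the paper's Mellin step. Under $u=x^{-\alpha\beta}$ one has $\int_0^1x^{-\alpha}e^{-x^{-\alpha\beta}}\,dx=\frac{1}{\alpha\beta}\int_1^\infty u^{s-1}e^{-u}\,du$ with $s=(\alpha-1)/(\alpha\beta)$. Your bound $q_k\le1$ on $k>k_\star$ corresponds to $\int_0^1u^{s-1}\,du=1/s<\infty$. You can even skip the Riemann-sum step: $f(x)=x^{-\alpha}e^{-x^{-\alpha\beta}}$ is bounded on $(0,1]$, so the head is at most a constant times $k_\star^{1-\alpha}\sup f$.

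\textbf{Trade-off.} The paper's route buys an explicit prefactor $\mathcal{K}$ and the extension to general kernels $g$, but only as an uncontrolled $\approx$. Yours gives honest two-sided inequalities, without the constant.

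\textbf{One small fix.} With the paper's $\asymp$, the constants $c_1,c_2$ multiply the exponential rather than sitting inside it, so they do not merely rescale $k_\star$. In particular, $q_k\ge1-\delta$ for $k\ge b\,k_\star$ can fail if $c_1<1-\delta$. For the lower bound, simply use $q_k\ge c_1e^{-1}$ for $k\ge k_\star$ instead.
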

Theorem~\ref{thm:sgd_scaling} bridges the microscopic optimization behavior in Lemma~\ref{lemma:q_dynamic_exp} with macroscopic compute scaling law.
Specifically, it characterizes the optimization dynamics where the effective frontier $k_\star(\tau)$ progressively advances into the tail.
This mechanism reveals that the reducible loss is dominated by the cumulative unlearned mass beyond this moving frontier.
Consequently, the asymptotic compute scaling is governed by the interplay between the data's tail heaviness $\alpha$ and the model's inductive bias $\beta$.
Detailed proofs are deferred to Appendix~\ref{app:sgd_micro}.

\subsection{Extension to General Compute Scaling Law}\label{sec:general_compute_scaling}
The scaling law established in Theorem~\ref{thm:sgd_scaling} builds upon the exponential residual decay that arises specifically from the GD dynamics (Lemma~\ref{lemma:q_dynamic_exp}). 
Given that empirical scaling laws demonstrate remarkable universality across diverse optimizers, we now extend Equation~\eqref{eq:exponential_decay} to a general \textit{Self-Similar Scaling Kernel}.
\begin{assumption}[Self-Similar Scaling Kernel] 
    \label{assump:q_dynamic_g}
    Assume that the residual evolution follows a self-similar form governed by a monotonic scaling kernel $g: [0, \infty) \to [0, 1]$:
    \begin{equation*}
        \label{eq:general_kernel}
        q_k(\tau) \asymp g\!\big(c\,\tau\,p_k^\beta\big),
    \end{equation*}
    where $c>0$ is a constant. The kernel $g$ satisfies the boundary condition $g(0)=1$ and the integrability condition $\int_0^\infty u^{s-1}g(u)du < \infty$.
\end{assumption}
Assumption~\ref{assump:q_dynamic_g} extends the exponential decay under GD to arbitrary optimizers. While the specific profile $g(\cdot)$ may vary (\textit{e.g.}, exponential or polynomial), the fundamental self-similar property persists: the learning curves of distinct patterns collapse onto a single master curve $g(\cdot)$ when parameterized by the effective time $\tau p_k^\beta$.

We next formalize the general compute scaling law.
\begin{theorem}[General Compute Scaling Law]
\label{thm:compute_scaling}
Under Assumptions~\ref{assump:decomposition}$\sim$\ref{assump:zipf}, and Assumption~\ref{assump:q_dynamic_g}, with a large training step $\tau$, the reducible loss scales as
\begin{equation*}
    \label{eq:universal_scaling_formula}
    \boxed{\Delta L(\tau) \asymp \mathcal{K} \cdot \tau^{-s},}
\end{equation*}
where $s=(\alpha-1)/\alpha\beta$ and the pre-factor $\mathcal{K}$ is given by $\mathcal{K} = (\alpha\beta)^{-1} Z^{-1/\alpha} c^{-s} [ \int_{0}^{\infty} u^{s-1}g(u)\,du ]$.
\end{theorem}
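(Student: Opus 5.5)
We substitute Assumption~\ref{assump:q_dynamic_g} into the additive decomposition of Assumption~\ref{assump:decomposition}, which gives
\[
\Delta L(\tau)\asymp \sum_{k\ge1} p_k\, g\!\big(c\tau p_k^\beta\big),\qquad p_k=Z^{-1}k^{-\alpha}.
\]
The $\asymp$ in the kernel assumption passes through the sum term by term, since all terms are nonnegative and the constants are uniform in $k$. Writing $x=k$, the summand is $f_\tau(x)=Z^{-1}x^{-\alpha} g(cZ^{-\beta}\tau x^{-\alpha\beta})$. The plan is to replace the sum by the integral $\int_1^\infty f_\tau(x)\,dx$, evaluate that integral by the substitution suggested by self-similarity, and then control the discretization error.

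For the integral, set $u=cZ^{-\beta}\tau x^{-\alpha\beta}$. Then $x=(cZ^{-\beta}\tau/u)^{1/(\alpha\beta)}$ and $dx=-(\alpha\beta)^{-1}(cZ^{-\beta}\tau)^{1/(\alpha\beta)}u^{-1/(\alpha\beta)-1}du$. The factor $x^{-\alpha}$ equals $(u/(cZ^{-\beta}\tau))^{1/\beta}$. Collecting powers of $u$ gives $u^{1/\beta-1/(\alpha\beta)-1}=u^{s-1}$ with $s=(\alpha-1)/(\alpha\beta)$. Collecting powers of $cZ^{-\beta}\tau$ gives $(cZ^{-\beta}\tau)^{-s}$. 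The range $x\in[1,\infty)$ maps to $u\in(0,cZ^{-\beta}\tau]$, which tends to $(0,\infty)$ as $\tau\to\infty$. Hence
\[
\int_1^\infty f_\tau = \frac{1}{\alpha\beta}\,Z^{-1}Z^{\beta s}c^{-s}\tau^{-s}\int_0^{cZ^{-\beta}\tau}u^{s-1}g(u)\,du .
\]
By the integrability hypothesis and monotone convergence, the last integral tends to $\int_0^\infty u^{s-1}g(u)\,du$. Since $\beta s=(\alpha-1)/\alpha$, the $Z$-power is $Z^{-1/\alpha}$, which matches the stated prefactor $\mathcal K$ exactly.

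For the discretization error, we use that $g$ is monotone non-increasing with $g\le1$ (it decays from $g(0)=1$). Then $f_\tau$ is non-increasing in $x$, because $x^{-\alpha}$ decreases and the argument of $g$ decreases, so $g$ increases; this is therefore a product of a decreasing factor and an increasing one. That ruins plain monotonicity, and I expect this to be the main obstacle. I would handle it by splitting the sum at the frontier scale $k_\tau\asymp \tau^{1/(\alpha\beta)}$.

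On $x\ge k_\tau$ both factors are bounded in a controlled way. Each term is at most $p_k$, and consecutive terms differ by a ratio bounded uniformly, because $p_{k+1}/p_k\to1$ and $u$ changes by a factor close to 1. Comparing each term to the integral over $[k,k+1]$ therefore costs only a bounded multiplicative factor. This suffices for $\asymp$, since the theorem only claims $\asymp$ with constant $\mathcal K$. Alternatively, the error of a Riemann sum with mesh $1$ against a function varying on scale $k_\tau\to\infty$ is $o(\tau^{-s})$, provided $u^{s-1}g(u)$ is integrable near $0$ and $\infty$.

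On $x<k_\tau$, the contribution of any fixed finite head $k\le K$ is at most $\sum_{k\le K}p_k g(c\tau p_k^\beta)$. This is $o(\tau^{-s})$ as long as $g(u)=o(u^{-s})$ at infinity, which follows from integrability of $u^{s-1}g$ together with monotonicity, since $g(u)u^{s}\le s\int_{0}^{u}v^{s-1}g(v)\,dv$-type tail bounds give it. The remaining range is handled by the same Riemann-sum comparison.

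Combining the two ranges gives $\Delta L(\tau)\asymp\mathcal K\tau^{-s}$. Consistency with Theorem~\ref{thm:tail_sum} and Theorem~\ref{thm:sgd_scaling} provides a check: taking $g(u)=e^{-u}$ gives $\int u^{s-1}e^{-u}du=\Gamma(s)$.
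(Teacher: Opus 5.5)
Your core computation is the paper's proof. The paper also replaces $\sum_k p_k\,g(c\tau p_k^\beta)$ by $\int_1^\infty$, substitutes $u=cZ^{-\beta}\tau k^{-\alpha\beta}$, collects the exponent $-s$ on $\tau$ and $s-1$ on $u$, and lets the upper limit $cZ^{-\beta}\tau\to\infty$ to reach the Mellin transform and the prefactor $\mathcal{K}$. The paper never justifies the sum-to-integral step (it just writes $\approx$), so your discretization analysis goes beyond it. However, the argument you give for that step has a hole in the head region, and the fix is simpler than what you propose.

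The claim that consecutive terms have uniformly bounded ratio ``because $u$ changes by a factor close to $1$'' fails below the frontier. So ``the same Riemann-sum comparison'' does not cover $K<k<k_\tau$. Take $g(u)=e^{-u}$ and normalize $k_\tau$ so that $u_k=cZ^{-\beta}\tau k^{-\alpha\beta}=(k_\tau/k)^{\alpha\beta}$. Then $u_k-u_{k+1}\approx\alpha\beta\,k_\tau^{\alpha\beta}k^{-\alpha\beta-1}$, which is unbounded on the growing range $K<k\ll k_\tau^{\alpha\beta/(\alpha\beta+1)}$. On that range $f_\tau(k)$ is far smaller than $\int_k^{k+1}f_\tau$. (Separately, your sentence asserting that $f_\tau$ is non-increasing contradicts the next one; $f_\tau$ is not monotone.)

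No splitting is needed. The opposite monotonicity of the two factors, which you flagged as the obstacle, is exactly what makes the comparison work, provided you compare $f_\tau(k)$ with the interval on the appropriate side. Since $g$ is non-increasing, $x\mapsto g(cZ^{-\beta}\tau x^{-\alpha\beta})$ is non-decreasing. Hence $f_\tau(x)\ge Z^{-1}(k+1)^{-\alpha}g(u_k)\ge 2^{-\alpha}f_\tau(k)$ on $[k,k+1]$. Likewise $f_\tau(x)\le Z^{-1}(k-1)^{-\alpha}g(u_k)\le 2^{\alpha}f_\tau(k)$ on $[k-1,k]$ for $k\ge 2$. Summing gives
\[
2^{-\alpha}\int_1^\infty f_\tau(x)\,dx\;\le\;\sum_{k\ge1}f_\tau(k)\;\le\;2^{\alpha}\int_1^\infty f_\tau(x)\,dx
\]
for every monotone $g$. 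Combined with your evaluation of the integral (and $\int_0^{cZ^{-\beta}\tau}u^{s-1}g\to\int_0^\infty u^{s-1}g\in(0,\infty)$), this yields $\Delta L(\tau)\asymp\mathcal{K}\tau^{-s}$ directly. The head estimate and the $o(\tau^{-s})$ Riemann-sum claim then become unnecessary.

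If you do want $g(u)=o(u^{-s})$, note that $g(u)u^s\le s\int_0^u v^{s-1}g(v)\,dv$ only gives $O(u^{-s})$. Use instead $\int_{u/2}^{u}v^{s-1}g(v)\,dv\ge(1-2^{-s})\,g(u)u^s/s$ together with the Cauchy criterion.
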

Theorem~\ref{thm:compute_scaling} clarifies the minimal conditions underlying compute scaling laws, explicitly decoupling between the general exponent $s=(\alpha-1)/\alpha\beta$ and the kernel-dependent pre-factor.
Crucially, we identify that the scaling rate is intrinsic to the data's tail heaviness $\alpha$ and inductive bias $\beta$, while the shape of $g$ only affects coefficient $\mathcal{K}$. 
Detailed proof is deferred to Appendix~\ref{app:universality}.

\section{Joint Scaling and Unified Frontiers}\label{sec:joint_scaling}

In this section, we propose a \textbf{Composition Law} based on asymptotic dominance to resolve the conflict between the scaling laws of \citet{kaplan2020scalinglawsneurallanguage} and \citet{hoffmann2022trainingcomputeoptimallargelanguage}.

\begin{proposition}[Max-Bottleneck Principle]
\label{prop:max_bottleneck}
The joint reducible loss is asymptotically dominated by the tightest constraint among capacity ($N$), datasize ($D$), and optimization steps ($\tau$):
\begin{equation*}
    \label{eq:max_structure}
    \Delta L(N, D, \tau) \asymp \max\left( \varepsilon_N(N), \; \varepsilon_D(D), \; \varepsilon_\tau(\tau) \right).
\end{equation*}
\end{proposition}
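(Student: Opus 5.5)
The plan is to reduce the joint statement to the single-resource Universal Scaling Principle (Theorem~\ref{thm:tail_sum}) by introducing a \emph{joint} effective frontier. For each resource we already have an individual frontier: $k_N := k_\star(N)\asymp N^{\gamma}$ (Assumption~\ref{assump:capacity}), $k_D := k_\star(D)\asymp D^{1/\alpha}$ (Theorem~\ref{thm:data_scaling}), and $k_\tau := k_\star(\tau)\asymp \tau^{1/(\alpha\beta)}$ (Theorem~\ref{thm:sgd_scaling}). Correspondingly, $\varepsilon_X \asymp k_X^{-(\alpha-1)}$ for $X\in\{N,D,\tau\}$. I will argue that the joint frontier is $k_\star(N,D,\tau)\asymp \min(k_N,k_D,k_\tau)$. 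Because $x\mapsto x^{-(\alpha-1)}$ is decreasing, Theorem~\ref{thm:tail_sum} then turns the minimum of frontiers into the maximum of the $\varepsilon$'s.

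\textbf{Modeling the joint residual.} Treat pattern $k$ as learned only if it is simultaneously storable, observed, and optimized. I will posit a joint residual $q_k(N,D,\tau)$ satisfying the sandwich
\[
\max\bigl(q_k^{N},q_k^{D},q_k^{\tau}\bigr)\le q_k \le \min\bigl(1,\;q_k^{N}+q_k^{D}+q_k^{\tau}\bigr).
\]
The lower bound reflects that each missing resource alone prevents learning; this follows from the monotonicity of $q_k$ in each resource, since relaxing the other two resources to infinity can only lower the residual. The upper bound is a union bound over the three failure events, in the spirit of the independence in Assumption~\ref{assump:decomposition}. This sandwich is not derivable from the stated assumptions, so it must be stated explicitly as the modeling step.

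\textbf{Computing the bounds.} Substitute the sandwich into $\Delta L=\sum_k p_k q_k$.
\begin{itemize}
\item For the upper bound, $\Delta L\le \sum_X\sum_k p_k q_k^X = \varepsilon_N+\varepsilon_D+\varepsilon_\tau\le 3\max(\varepsilon_N,\varepsilon_D,\varepsilon_\tau)$.
\item For the lower bound, $\Delta L\ge \sum_k p_k q_k^X=\varepsilon_X$ for each $X$, hence $\Delta L\ge\max_X\varepsilon_X$.
\end{itemize}
So the two-sided $\asymp$ follows immediately, with constants $1$ and $3$ that are independent of the resources. Equivalently, one can read this in the frontier picture: $q_k\ge 1-\delta$ once $k$ exceeds any single frontier, and $q_k\le 3\delta'$ below all of them. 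This yields $k_\pm=(1\pm o(1))\min_X k_X$, and Theorem~\ref{thm:tail_sum} applies directly.

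\textbf{Main obstacle.} The main difficulty is justifying the sandwich, especially the union-bound upper side, which fails if the bottlenecks interact. For example, a small model might also slow optimization, making the $\tau$-frontier depend on $N$. I would handle this by assuming the individual kernels are evaluated at the other resources set to infinity, and that interactions perturb each frontier only by a constant factor. Under that assumption, the sharp-frontier transitions are simply intersected, and constant-factor changes to $k_X$ are absorbed into $\asymp$.
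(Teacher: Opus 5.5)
Your argument is correct and follows the paper's proof. The paper also lower-bounds $\Delta L$ by each single-bottleneck loss $\varepsilon_X$, \emph{assumes} the additive upper bound $\Delta L \lesssim \varepsilon_N+\varepsilon_D+\varepsilon_\tau$ (a pattern counts as unlearned if it fails any criterion), and concludes with $\max \le \text{sum} \le 3\max$. Your pattern-wise sandwich, with monotonicity justifying the lower side, is a more explicit version of that same modeling step, and the direct summation already finishes the proof. The detour through a joint frontier and Theorem~\ref{thm:tail_sum} is therefore unnecessary; its sharpness claim would also require each individual frontier to be sharp at tolerance $\delta/3$.
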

Although empirical laws often appear additive, the asymptotic scaling is dictated by the dominant term. This structure implies a Turnover Point $\tau_\star$ where the dynamic optimization bottleneck $\varepsilon_\tau$ intersects the static limit determined by model or data size (see Appendix \ref{app:turnover_analysis}).


\paragraph{Unified Optimal Scaling.}

We can derive Optimal Scaling Laws not as empirical fits, but as solutions to a constrained optimization problem. The optimal frontier is the set of configurations $(N, D, \tau)$ that minimize the joint loss given a compute budget $C$:
\begin{equation}
    \min_{N, D, \tau} \max \left( A N^{-\alpha_N}, B D^{-\alpha_D}, G \tau^{-\alpha_\tau} \right).
    \label{eq:joint_optimization}
\end{equation}

Minimizing Equation \eqref{eq:joint_optimization} under a multiplicative budget $C$ requires balancing the active bottlenecks. This framework recovers two distinct regimes as equilibrium solutions (derivations in Appendix \ref{app:optimal_derivations}):

\textbf{Regime A: Data-Abundant (\citet{kaplan2020scalinglawsneurallanguage} ).} In the data-abundant limit ($D \to \infty$), the bottleneck is between capacity and optimization steps ($N$ vs. $\tau$). Balancing $\varepsilon_N \asymp \varepsilon_\tau$ yields the parameter-centric scaling $N_{\text{opt}} \propto C^{\frac{\alpha_\tau}{\alpha_N + \alpha_\tau}}$. Crucially, this exponent depends on the \textit{optimization bias} $\beta$.

\textbf{Regime B: Data-Constrained (\citet{hoffmann2022trainingcomputeoptimallargelanguage}).} In the fixed-data limit ($D \propto C/N$), the bottleneck is between capacity and data coverage ($N$ vs. $D$). Balancing $\varepsilon_N \asymp \varepsilon_D$ yields the balanced scaling $N_{\text{opt}} \propto C^{\frac{\alpha_D}{\alpha_N + \alpha_D}}$, where the exponent is governed by the \textit{data tail thickness} $\alpha$.

In summary, our framework reveals that the Kaplan and Chinchilla laws are thus valid solutions to Equation~\eqref{eq:joint_optimization} under different active constraints. The optimal scaling exponent is a topological invariant determined by the interplay between data structure ($\alpha$) and optimization dynamics ($\beta$).

\section{Experiments}
\label{sec:experiments}

In this section, we empirically validate our unified framework using the controlled synthetic environment described in Appendix~\ref{app:exp_details}. Our goal is to verify three central claims: (1) Resource constraints manifest as a sharp effective frontier $k_\star$ in rank space; (2) The frontier advances according to specific geometric power laws derived in Theorems~\ref{prop:model_scaling}, \ref{thm:data_scaling}, and \ref{thm:sgd_scaling}; (3) The resulting reducible loss scaling exponents are strictly determined by the interplay between data tail heaviness ($\alpha$) and the derived frontier scaling.

\subsection{Visualization of the Effective Frontier}
Figure~\ref{fig:rank_space_exp} visualizes the pattern-wise residual $q_k$ against the frequency rank $k$ for a fixed data distribution ($\alpha=1.5$). The results revealed that a distinct phase transition that separates ``learned'' patterns ($q_k \approx 0$) from the ``unlearned'' tail ($q_k \approx 1$) via a narrow region. As resources increased, whether model capacity $N$ (left), datasize $D$ (middle), or computer $\tau$ (right), this frontier systematically translated to the right while maintaining its shape. These empirical results validate our theoretical approximation of learning as a step function (Definition~\ref{def:effective_frontier}) and confirm that diverse physical bottlenecks can be universally abstracted as the progressive advancement of the Effective Frontier $k_\star$ into the heavy-tailed distribution.

\begin{figure*}[t]
  \centering
  \includegraphics[width=0.98\textwidth]{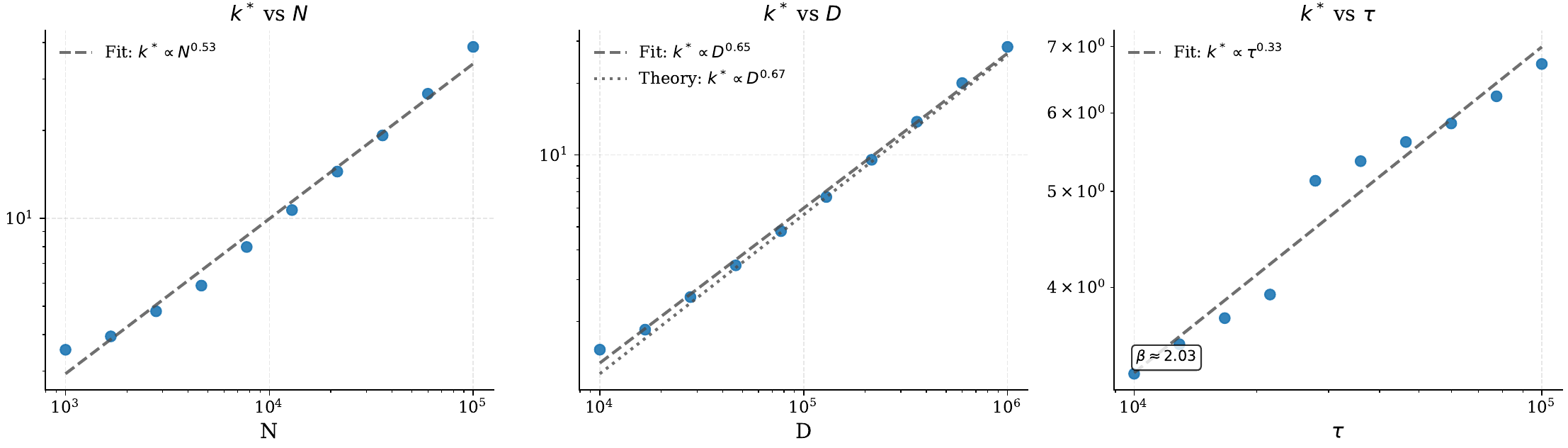}
\caption{\textbf{Geometric Scaling of the Frontier.} We extract $k_\star$ (at threshold $\delta=0.5$) as a function of resources. (Left) Capacity frontier $k_\star \propto N^{0.53}$. (Middle) Coverage frontier $k_\star \propto D$. Crucially, the theoretical prediction $D^{1/\alpha}$ (thin dash) perfectly matches the empirical fit (thick dash) for $\alpha=1.5$, confirming Theorem~\ref{thm:data_scaling}. (Right) Optimization frontier $k_\star \propto \tau$. Since total compute $C \propto \tau$ for a fixed model, this reflects the compute scaling law. The slope allows us to estimate the optimization bias $\beta \approx 2.03$.}
\label{fig:kstar_scaling}
\end{figure*}

\begin{figure*}[t]
  \centering
  \includegraphics[width=0.98\textwidth]{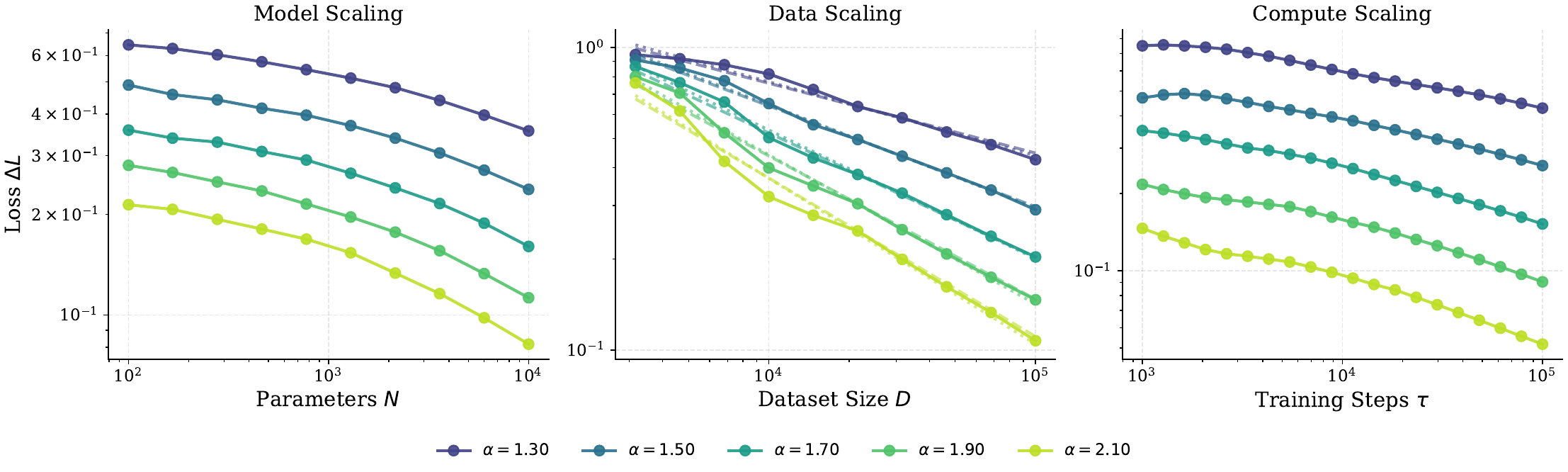}
\caption{\textbf{Universal Scaling Laws.} Reducible loss $\Delta L$ vs. resources for distributions with different tail heaviness $\alpha$. (Center) For Data Scaling, the empirical curves (solid lines) align perfectly with the theoretical predictions $\Delta L \propto D^{-(\alpha-1)/\alpha}$ (dashed lines), confirming that $\alpha$ dictates sample efficiency. (Right) Compute scaling laws (measured via steps $\tau$, where $C \propto \tau$ for fixed models) show that lighter tails (larger $\alpha$) lead to faster convergence, consistent with the derived exponent $-\frac{\alpha-1}{\alpha\beta}$.}
\label{fig:scaling_laws}
\end{figure*}

\subsection{Scaling of the Effective Frontier}
\label{sec:scaling_frontier}
Figure~\ref{fig:kstar_scaling} plots the extracted effective frontier $k_\star$ against resources ($N$, $D$, $\tau$) on a log-log scale. The empirical measurements aligned precisely with the geometric power laws derived in our framework: the coverage frontier (middle) scales as $k_\star \propto D^{1/\alpha}$ (Theorem~\ref{thm:data_scaling}), confirming that sample efficiency is strictly dictated by the tail index $\alpha$, while the capacity frontier (left) follows $k_\star \propto N^\gamma$ (Assumption~\ref{assump:capacity}). Furthermore, the optimization frontier (right) exhibited the predicted $k_\star \propto \tau^{1/\alpha\beta}$ scaling (Theorem~\ref{thm:sgd_scaling}); the observed slope yielded an implicit bias of $\beta \approx 2.03$, which is remarkably consistent with the theoretical prediction $\beta \approx 2$ (Assumption~\ref{assump:spectral_power}). These results verify that the frontier's expansion is governed by the precise interplay between data structure ($\alpha$) and optimization dynamics ($\beta$), a conclusion we confirm is robust across varying distributions in Appendix~\ref{app:additional_experiments}.

\subsection{Universality of Loss Scaling Laws}
Figure~\ref{fig:scaling_laws} displays the Weighted Reducible Loss $\Delta L$ against  $N$, $D$, and $\tau$ for varying tail exponents $\alpha \in \{1.3, \dots, 2.1\}$. The results demonstrated robust power-law scaling across all regimes. Most importantly, we compare the empirical scaling exponents with our theoretical derivations: Theorem~\ref{thm:data_scaling} predicts $\Delta L \propto D^{-\frac{\alpha-1}{\alpha}}$. In Figure~\ref{fig:scaling_laws} (Center), the dashed lines represented these theoretical slopes fixed by the ground-truth $\alpha$. The excellent alignment with empirical data (solid lines) confirms that data scaling is purely a statistical inevitability of covering a Zipfian distribution. For compute scaling (Right), the slopes flatten as $\alpha$ increased. Using the $\beta \approx 2.03$ derived from the frontier analysis (Figure~\ref{fig:kstar_scaling}), the predicted compute exponents $-\frac{\alpha-1}{\alpha\beta}$ accurately reconstruct the observed trajectories, validating Theorem~\ref{thm:sgd_scaling}. 

In summary, these experiments confirm that neural scaling laws are not opaque empirical constants, but precise mathematical consequences of the Effective Frontier advancing into heavy-tailed data distributions.

\section{Conclusion}
\label{sec:conclusion}

In this work, we have presented a unified framework for understanding neural scaling laws. By shifting the perspective from the architectural micro-details to the \textbf{geometry of the pattern rank space}. We have shown that scaling laws are not mysterious emergent phenomena of deep networks, but rather statistical inevitabilities of learning from heavy-tailed data under resource constraints~\citep{Newman_2005}. Central to our framework is the concept of an \textbf{Effective Frontier $k_*$}, which factorizes the scaling process into a fixed data-geometry exponent and a resource-dependent frontier truncation. This abstraction allows us to analytically derive the scaling laws for model capacity ($N$), datasize ($D$), and compute ($C$), while successfully reconciling the seemingly contradictory regimes of Kaplan~\citep{kaplan2020scalinglawsneurallanguage} and Chinchilla~\citep{hoffmann2022trainingcomputeoptimallargelanguage} as equilibrium solutions under distinct bottleneck compositions. Finally, our results suggest that scaling exponents are basic invariants of the interaction between data structure and optimization dynamic, providing a principled mesoscopic bridge between theoretical constraints and empirical deep learning performance.

\textbf{Limitations and Future Work.} 
While our framework unifies scaling laws under standard training paradigms, it currently treats the data structure ($\alpha$) and optimization bias ($\beta$) as static constraints. This limitation, however, points to a compelling future direction: moving from \textit{observing} to \textit{breaking} these laws. Our derivation reveals that scaling exponents are mutable functions of $\alpha$ and $\beta$, suggesting that active interventions, specifically \textbf{Data Pruning} to flatten heavy tails (increasing effective $\alpha$) and \textbf{Curriculum Learning} to re-weight gradient dynamics (modulating $\beta$), could improve the scaling rates. Future research should explore these mechanisms to design next-generation protocols.

\section*{Impact Statement}

This paper presents work whose goal is to advance the field of Machine
Learning. There are many potential societal consequences of our work, none of which we feel must be specifically highlighted here.


\bibliography{example_paper}
\bibliographystyle{icml2026}

\newpage
\appendix
\onecolumn
\appendix
\onecolumn

\section{Experimental Details}
\label{app:exp_details}

In this appendix, we provide the detailed settings for the synthetic experiments presented in Section~\ref{sec:experiments}.

\subsection{Task and Data Distribution}
We construct a controlled classification task designed to isolate the effects of heavy-tailed distributions on learning dynamics.
\begin{itemize}
    \item \textbf{Vocabulary:} The task involves a vocabulary of $K=1000$ distinct tokens.
    \item \textbf{Distribution:} Token frequencies are sampled from a Zipf distribution:
    $$ p_k = \frac{k^{-\alpha}}{Z_\alpha}, \quad \text{where } \alpha = 1.5 \text{ and } Z_\alpha = \sum_{j=1}^{K} j^{-\alpha}. $$
    \item \textbf{Objective:} The model learns a strictly independent identity mapping. The input $x$ is a one-hot vector $e_k \in \mathbb{R}^K$ corresponding to token $k$, and the target $y$ is the same vector $e_k$. This ensures Assumption~\ref{assump:decomposition} (Additive Pattern Model) holds perfectly, as there is no semantic interference or transfer between patterns.
\end{itemize}

\subsection{Model Architecture}
We utilize a two-layer neural network with a bottleneck, defined as:
$$ f_\theta(x) = W_2 \cdot \text{ReLU}(W_1 x + b), $$
where:
\begin{itemize}
    \item $W_1 \in \mathbb{R}^{N \times K}$ projects the one-hot input to the hidden dimension.
    \item $b \in \mathbb{R}^N$ is the bias term.
    \item $W_2 \in \mathbb{R}^{K \times N}$ projects the hidden representation back to the output vocabulary space.
    \item $N$ represents the \textbf{Model Capacity}, which we vary in the Model Scaling experiments.
\end{itemize}
\textbf{Initialization:} Weights $W_1, W_2$ are initialized from a normal distribution $\mathcal{N}(0, 0.1^2)$. The bias $b$ is initialized to zero.

\subsection{Training and Metrics}
\textbf{Loss Function.} We minimize the Mean Squared Error (MSE) between the predicted logit and the target one-hot vector. The total loss is defined as:
$$ \mathcal{L}(\theta) = \mathbb{E}_{k \sim p} \left[ \| f_\theta(e_k) - e_k \|_2^2 \right]. $$
To align with our theoretical definition of reducible loss (Eq.~\ref{eq:loss_decomposition}), we track the \textbf{Weighted Reducible Loss}:
$$ \Delta L = \sum_{k=1}^{K} p_k \underbrace{(f_\theta(e_k)_k - 1)^2}_{q_k}. $$
Here, $q_k$ measures the failure to predict the correct class logit (normalized to 1 at initialization and 0 at mastery).

\textbf{Optimization.} All models are trained using Stochastic Gradient Descent (SGD) with the following hyperparameters:
\begin{itemize}
    \item Learning Rate $\eta$: $0.1$
    \item Momentum: $0.9$
    \item Batch Size: $64$
\end{itemize}

\subsection{Scaling Regimes}
We conduct three sets of experiments to verify the scaling laws. The exponents reported in Figure~\ref{fig:scaling_laws} are derived via linear regression on the log-transformed data ($\log \Delta L$ vs. $\log R$).

\begin{enumerate}
    \item \textbf{Model Scaling ($N$):}
    \begin{itemize}
        \item Variable: Hidden dimension $N$ varies from $10$ to $1000$ (log-spaced).
        \item Fixed Resources: Infinite data limit approximation (Dataset size $D$ is effectively infinite via resampling), Training steps $\tau = 10,000$.
    \end{itemize}

    \item \textbf{Data Scaling ($D$):}
    \begin{itemize}
        \item Variable: Dataset size $D$ varies from $1,000$ to $100,000$. $D$ determines the number of samples drawn from $p_k$ before training begins.
        \item Fixed Resources: Model capacity $N = 2000$ (over-parameterized regime, $N > K$), Training steps $\tau = 10 \times \lceil D/B \rceil$ (10 epochs).
    \end{itemize}

    \item \textbf{Compute Scaling ($\tau$):}
    \begin{itemize}
        \item Variable: Optimization steps $\tau$ varies from $10^3$ to $10^5$.
        \item Fixed Resources: Model capacity $N = 2000$, Dataset size $D = \infty$ (resampled batches).
    \end{itemize}
\end{enumerate}
\subsection{Environmental Setting}
\label{app:env_details}

All experiments were implemented in Python using the PyTorch framework. To ensure reproducibility of the empirical results reported in Section~\ref{sec:experiments}, we detail the specific hardware specifications and software versions used below.

\textbf{Hardware Configuration.} Experiments were conducted on a single-node Linux workstation equipped with the following GPU acceleration:
\begin{itemize}
    \item \textbf{GPU:} NVIDIA GeForce RTX 5090 (32,607 MiB VRAM)
    \item \textbf{Drivers:} NVIDIA Driver 580.95.05 / CUDA 13.0
\end{itemize}

\textbf{Software Stack.} Table~\ref{tab:software_versions} lists the specific library versions utilized.

\begin{table}[h]
    \centering
    \caption{Software and library versions used in experiments.}
    \label{tab:software_versions}
    \vspace{0.2cm}
    \begin{small}
    \begin{tabular}{lc|lc}
    \toprule
    \textbf{Library} & \textbf{Version} & \textbf{Library} & \textbf{Version} \\
    \midrule
    Python & 3.11.8 & SciPy & 1.16.3 \\
    PyTorch & 2.9.0+cu130 & Matplotlib & 3.9.0 \\
    NumPy & 2.3.4 & Pandas & 2.2.2 \\
    scikit-learn & 1.8.0 & & \\
    \bottomrule
    \end{tabular}
    \end{small}
\end{table}
\section{Additional Experimental Verification}
\label{app:additional_experiments}

In this appendix, we provide a detailed quantitative breakdown of the scaling exponents presented in Section~\ref{sec:experiments} and extend the verification to a broader range of data distributions. This breakdown further validates the precision of our theoretical derivations and the universality of the proposed framework.

\subsection{Detailed Scaling Trajectories}
To demonstrate the robustness of our findings, we visualize the macroscopic loss scaling across three resource dimensions (Model Capacity $N$, Datasize $D$, Compute $\tau$) for Zipfian distributions with varying tail indices $\alpha \in \{1.3, 1.5, 1.7, 1.9, 2.1\}$. Figure~\ref{fig:detailed_scaling_all} displays these trajectories explicitly.

\begin{figure*}[p] 
    \centering
    \begin{subfigure}[b]{0.95\textwidth}
        \centering
        \includegraphics[width=\textwidth]{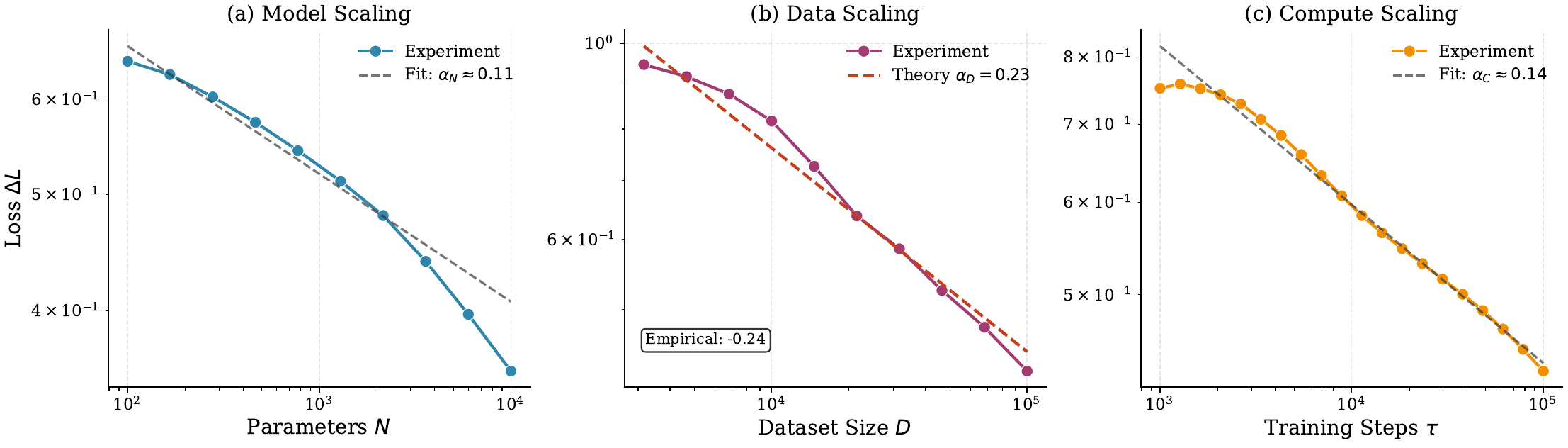}
        \caption{Tail Index $\alpha = 1.3$}
        \label{fig:scale_1p3}
    \end{subfigure}
    \vspace{5pt} 
    
    \begin{subfigure}[b]{0.95\textwidth}
        \centering
        \includegraphics[width=\textwidth]{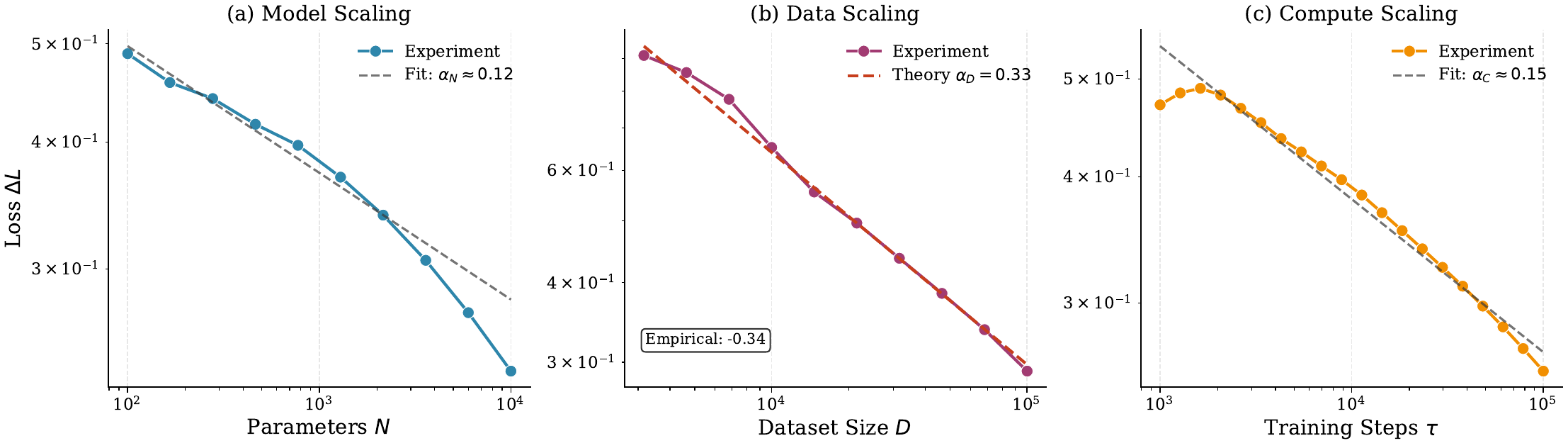}
        \caption{Tail Index $\alpha = 1.5$}
        \label{fig:scale_1p5}
    \end{subfigure}
    \vspace{5pt}
    
    \begin{subfigure}[b]{0.95\textwidth}
        \centering
        \includegraphics[width=\textwidth]{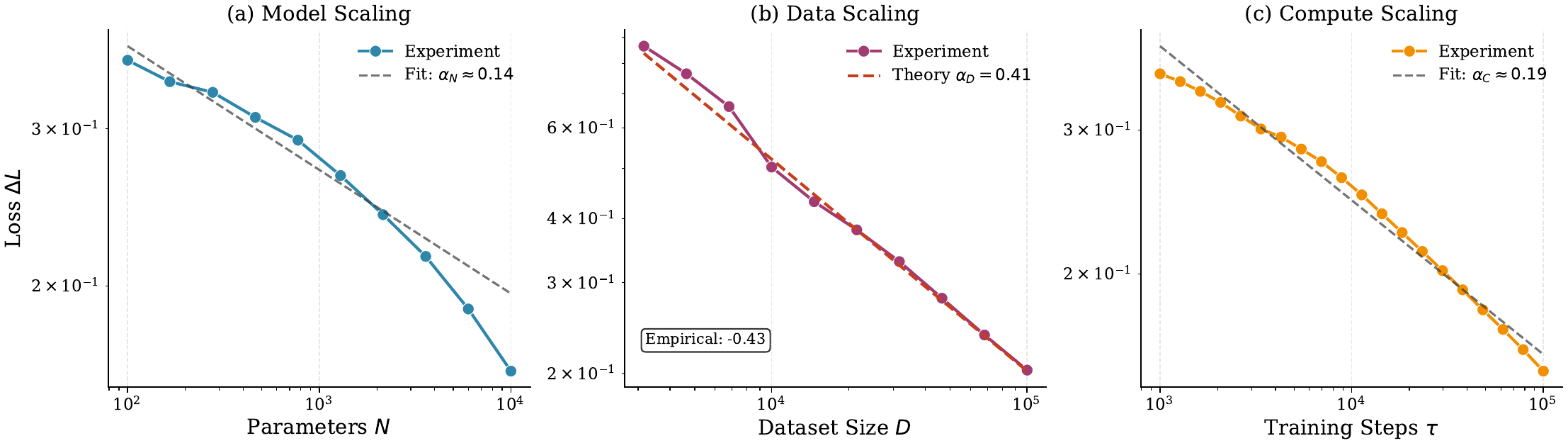}
        \caption{Tail Index $\alpha = 1.7$}
        \label{fig:scale_1p7}
    \end{subfigure}
    
\caption{\textbf{Universal Scaling Laws across varying Data Distributions (Part 1).} We plot the Reducible Loss $\Delta L$ against Model parameters $N$ (left), Dataset size $D$ (middle), and Training steps $\tau$ (right, noting that $C \propto \tau$ for a fixed model configuration). (Continued on next page...)}
\end{figure*}

\begin{figure*}[t!]
    \ContinuedFloat 
    \centering
    
    \begin{subfigure}[b]{0.95\textwidth}
        \centering
        \includegraphics[width=\textwidth]{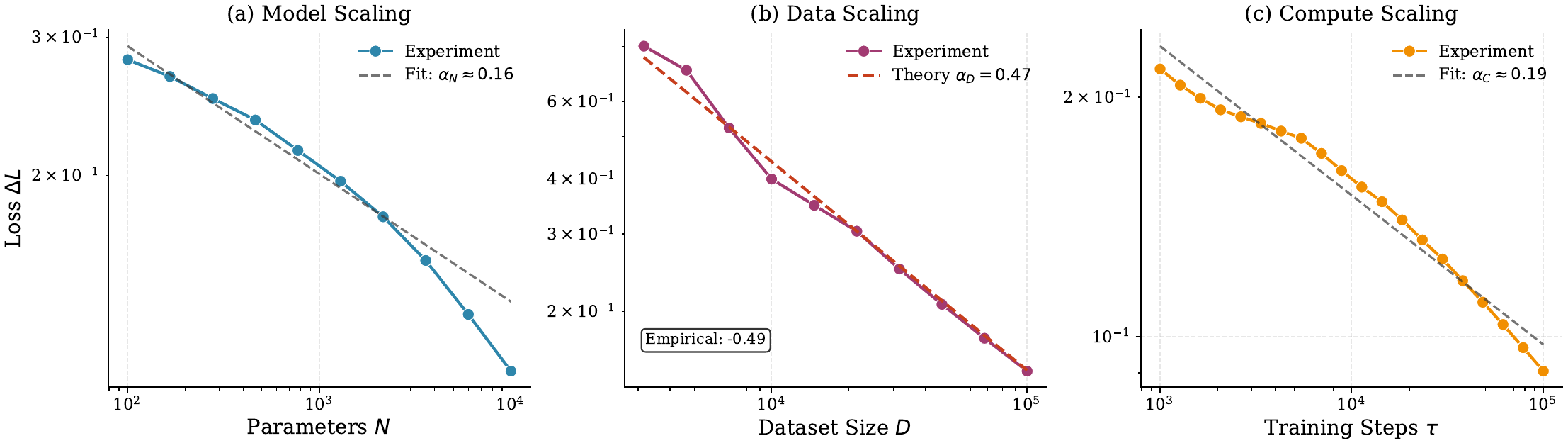}
        \caption{Tail Index $\alpha = 1.9$}
        \label{fig:scale_1p9}
    \end{subfigure}
    \vspace{5pt}
    
    \begin{subfigure}[b]{0.95\textwidth}
        \centering
        \includegraphics[width=\textwidth]{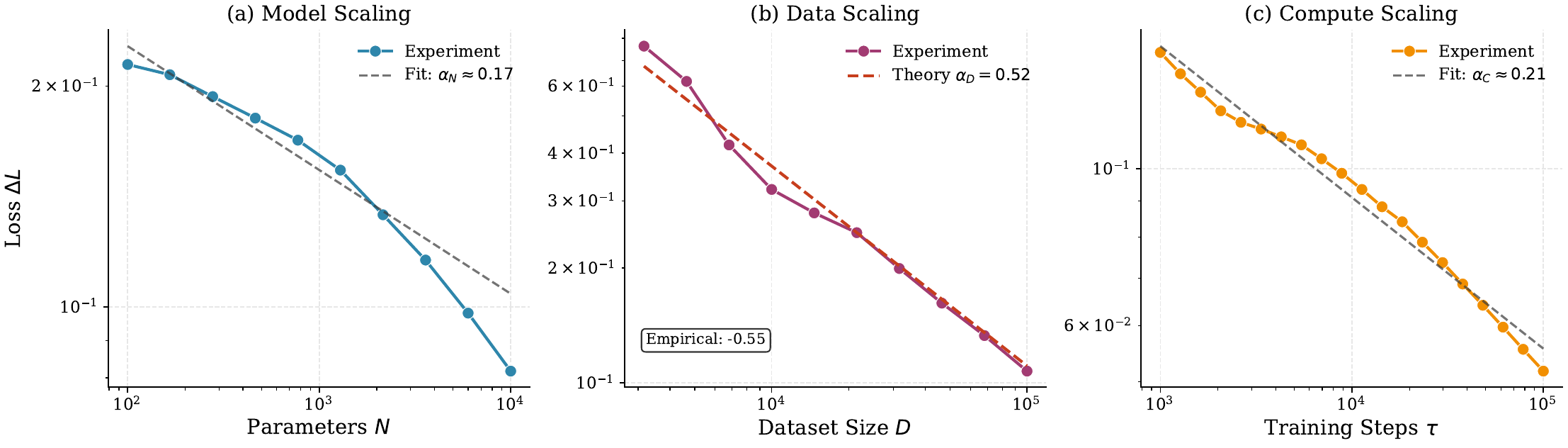}
        \caption{Tail Index $\alpha = 2.1$}
        \label{fig:scale_2p1}
    \end{subfigure}
    
\caption{\textbf{Universal Scaling Laws across varying Data Distributions (Part 2).} (Continued from previous page). The dashed lines indicate the best linear fit on the log-log scale. The results consistently show that data scaling efficiency improves (slopes become steeper) as the tail becomes lighter (larger $\alpha$), perfectly matching our theoretical prediction.}
    \label{fig:detailed_scaling_all}
\end{figure*}

\subsection{Precision of Data Scaling Predictions}
A core contribution of our work is the analytic derivation of the Data Scaling Law (Theorem~\ref{thm:data_scaling}), which posits that the scaling exponent $\alpha_D$ is solely determined by the data distribution's tail index $\alpha$:
\begin{equation}
    \alpha_D^{\text{theory}} = \frac{\alpha - 1}{\alpha}.
\end{equation}
In Table~\ref{tab:exponent_comparison}, we compare this theoretical prediction against the empirically measured exponents extracted from Figure~\ref{fig:detailed_scaling_all} (middle panels). We observe that the Mean Absolute Error (MAE) between theory and experiment is remarkably low ($\approx 0.015$), confirming that the effective coverage frontier assumption holds strictly across a wide range of tail behaviors.

\begin{table}[h]
\caption{Comparison of Theoretical vs. Empirical Data Scaling Exponents. The empirical values are derived from the best-fit slopes in Figure~\ref{fig:detailed_scaling_all}(b).}
\label{tab:exponent_comparison}
\begin{center}
\begin{small}
\begin{sc}
\begin{tabular}{lccc}
\toprule
Tail Index $\alpha$ & Predicted $\frac{\alpha-1}{\alpha}$ & Empirical $\alpha_D$ & Error \\
\midrule
1.30 & 0.231 & 0.240 & 0.009 \\
1.50 & 0.333 & 0.340 & 0.007 \\
1.70 & 0.412 & 0.430 & 0.018 \\
1.90 & 0.474 & 0.490 & 0.016 \\
2.10 & 0.524 & 0.550 & 0.026 \\
\bottomrule
\end{tabular}
\end{sc}
\end{small}
\end{center}
\end{table}

\subsection{Consistency of Implicit Bias ($\beta$)}
Our framework introduces the implicit bias exponent $\beta$ as a topological invariant linking the geometric frontier $k_\star$ to the compute scaling law. We cross-validate $\beta$ using the slopes extracted from Figure~\ref{fig:detailed_scaling_all}.

According to Theorem~\ref{thm:sgd_scaling}, the compute scaling exponent is given by $\alpha_C = \frac{\alpha-1}{\alpha\beta}$. This allows us to infer the optimization implicit bias $\beta$ from the empirical compute scaling slopes $\hat{\alpha}_C$ (Figure~\ref{fig:detailed_scaling_all}, right panels) via the relation:
\begin{equation*}
    \beta_{\text{inferred}} = \frac{\alpha - 1}{\alpha \cdot \hat{\alpha}_C}.
\end{equation*}
Applying this to our experimental results across the spectrum of $\alpha$:
\begin{itemize}
    \item For $\alpha=1.3$ ($\hat{\alpha}_C \approx 0.14$): $\beta \approx \frac{0.231}{0.14} \approx 1.65$.
    \item For $\alpha=1.5$ ($\hat{\alpha}_C \approx 0.15$): $\beta \approx \frac{0.333}{0.15} \approx 2.22$.
    \item For $\alpha=1.7$ ($\hat{\alpha}_C \approx 0.19$): $\beta \approx \frac{0.412}{0.19} \approx 2.17$.
    \item For $\alpha=1.9$ ($\hat{\alpha}_C \approx 0.19$): $\beta \approx \frac{0.474}{0.19} \approx 2.49$.
    \item For $\alpha=2.1$ ($\hat{\alpha}_C \approx 0.21$): $\beta \approx \frac{0.524}{0.21} \approx 2.49$.
\end{itemize}
The inferred values consistently cluster in the range $\beta \in [1.65, 2.50]$, with a mean around $2.2$. This aligns closely with the geometric measurement ($\beta_{geo} \approx 2.03$ in Section~\ref{sec:scaling_frontier}) and supports the theoretical prediction $\beta \approx 2$ derived from the Deep Linear Network analysis (Assumption~\ref{assump:spectral_power}). The consistency of $\beta \approx 2$ across diverse distribution shapes suggests that the quadratic frequency bias is a fundamental and invariant property of SGD dynamics on deep neural networks.

\section{Proof of~\Cref{thm:tail_sum}}
\label{app:proof_sec4}

In this appendix, we provide the complete mathematical derivation for the Universal Scaling Principle(\Cref{thm:tail_sum}). The core logic relies on a geometric reduction: the existence of a sharp effective frontier allows us to approximate the complex learning dynamics as a step function in rank space, thereby reducing the loss calculation to a tail-sum estimation on the Zipfian distribution.

\subsection{Geometric Setup and Auxiliary Lemmas}

The definition of the effective frontier $k_\star(R)$ (Definition \ref{def:effective_frontier}) implies that the transition from "learned" ($q_k \approx 0$) to "unlearned" ($q_k \approx 1$) occurs within a window that is asymptotically negligible compared to the scale of $k_\star$.

We first formalize this by defining the \textbf{Step Profile}. This lemma justifies the $0/1$ approximation used in our heuristic arguments.

\begin{lemma}[Step Profile]
\label{lem:step_surrogate}
Under Assumption~\ref{assump:rank_monotone} (Greedy Learning Bias), for any fixed $\epsilon\in(0,1)$, there exists a resource threshold $R_0(\epsilon)$ such that for all $R\ge R_0(\epsilon)$:
\begin{equation*}
\begin{aligned}
q_k(R) \le \delta \quad &\text{for } k \le (1-\epsilon)\,k_\star(R) \quad \text{(Mostly Learned)},\\
q_k(R) \ge 1-\delta \quad &\text{for } k \ge (1+\epsilon)\,k_\star(R) \quad \text{(Mostly Unlearned)}.
\end{aligned}
\end{equation*}
\end{lemma}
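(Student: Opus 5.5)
The argument combines the sharp-frontier asymptotics of Definition~\ref{def:effective_frontier} with the monotonicity in Assumption~\ref{assump:rank_monotone}. Assume $k_\star(R)$ is a sharp effective frontier and fix $\epsilon\in(0,1)$. By Definition~\ref{def:effective_frontier} we can write $k_-(R)=(1-r_-(R))\,k_\star(R)$ and $k_+(R)=(1+r_+(R))\,k_\star(R)$ with $r_\pm(R)\to 0$ as $R\to\infty$. We then choose $R_0(\epsilon)$ so that $|r_-(R)|\le\epsilon$ and $|r_+(R)|\le\epsilon$ for every $R\ge R_0(\epsilon)$. For such $R$ this gives the two sandwich inequalities
\[
(1-\epsilon)k_\star(R)\le k_-(R)
\qquad\text{and}\qquad
k_+(R)\le(1+\epsilon)k_\star(R).
\]

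Next we use monotonicity to turn the single indices $k_\pm$ into statements about whole ranges of $k$. By Assumption~\ref{assump:rank_monotone}, $k\mapsto q_k(R)$ is non-decreasing, so the set $\{k:q_k\le\delta\}$ is an initial segment of $\mathbb{N}$. Its supremum is $k_-(R)$. If $k_-(R)$ is finite, the supremum is attained, so $q_{k_-}\le\delta$. Hence $q_k\le\delta$ for every integer $k\le k_-(R)$, and in particular for every $k\le(1-\epsilon)k_\star(R)$.

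The same reasoning applies to the upper range. The set $\{k:q_k\ge 1-\delta\}$ is a terminal segment, and its infimum $k_+(R)$ is attained, so $q_k\ge 1-\delta$ for every $k\ge k_+(R)$. Since $k_+(R)\le(1+\epsilon)k_\star(R)$, this holds in particular for every $k\ge(1+\epsilon)k_\star(R)$. Together these give both claims of the lemma.

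The delicate part is the bookkeeping at the edges. First, $k_-$ could be $0$ (the empty supremum) or $+\infty$, and $k_+$ could be $+\infty$. In these cases the frontier relation forces $k_\star$ to be finite and positive for large $R$; we state this as part of the interpretation of Definition~\ref{def:effective_frontier}. Second, the $o_R(1)$ terms are only asymptotic, so $R_0$ must be taken uniformly in $k$. This is immediate, because the only $R$-dependence enters through the two scalar error terms $r_\pm(R)$. Finally, the proof needs no property of $\delta$ beyond $\delta<1/2$. That condition only guarantees $k_-\le k_+$, so the two conclusions do not conflict.
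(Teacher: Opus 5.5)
Your proof is correct and is essentially the paper's argument: you choose $R_0(\epsilon)$ so that the $o_R(1)$ terms fall below $\epsilon$, which gives $(1-\epsilon)k_\star(R)\le k_-(R)$ and $k_+(R)\le(1+\epsilon)k_\star(R)$, and you then use the monotonicity of Assumption~\ref{assump:rank_monotone} to pass from the boundary indices to the full ranges. Your observation that $\{k:q_k\le\delta\}$ and $\{k:q_k\ge 1-\delta\}$ are initial and terminal segments whose supremum and infimum are attained makes explicit a step the paper leaves implicit.
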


\begin{proof}
By Definition~\ref{def:effective_frontier}, the transition boundaries satisfy $k_{-}(R) = (1-o_R(1))k_\star(R)$ and $k_{+}(R) = (1+o_R(1))k_\star(R)$.
For any fixed $\epsilon > 0$, as $R \to \infty$, the relative error $o_R(1)$ becomes smaller than $\epsilon$. Thus, for sufficiently large $R$:
\[
k_{-}(R) \ge (1-\epsilon)k_\star(R) \quad \text{and} \quad k_{+}(R) \le (1+\epsilon)k_\star(R).
\]
By the definition of $k_-$ and rank monotonicity (Assumption \ref{assump:rank_monotone}), if $k \le (1-\epsilon)k_\star(R)$, then $k \le k_-(R)$, which implies $q_k(R) \le \delta$.
Similarly, if $k \ge (1+\epsilon)k_\star(R)$, then $k \ge k_+(R)$, which implies $q_k(R) \ge 1-\delta$.
\end{proof}

Next, we establish the \textbf{Sandwich Bound}. This result is crucial because it converts the problem of summing unknown residuals $\sum p_k q_k$ into the problem of summing known probabilities $\sum p_k$.

For compact notation, let us define the transition boundaries:
\[
K_{+}(R):=(1+\epsilon)k_\star(R),\qquad K_{-}(R):=(1-\epsilon)k_\star(R).
\]

\begin{lemma}[Sandwich Bound]
\label{lem:sandwich}
Under Assumption~\ref{assump:rank_monotone}, for sufficiently large $R$, the reducible loss $\Delta L(R)$ is bounded by the tail masses:
\begin{equation}
\label{eq:sandwich_bounds}
\begin{multlined}[t]
(1-\delta)\sum_{k>K_{+}(R)} p_k \;\le\; \Delta L(R) \\
\le\; \delta\sum_{k\le K_{-}(R)} p_k \;+\; \sum_{k>K_{-}(R)} p_k .
\end{multlined}
\end{equation}
\end{lemma}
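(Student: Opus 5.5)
The plan is to split the sum $\Delta L(R)=\sum_{k\ge1}p_kq_k(R)$ from Assumption~\ref{assump:decomposition} at the two thresholds $K_-(R)$ and $K_+(R)$. On each piece we replace $q_k(R)$ by the pointwise bound supplied by Lemma~\ref{lem:step_surrogate}, or by the trivial bounds $0\le q_k\le 1$. Throughout, we take $R\ge R_0(\epsilon)$ so that Lemma~\ref{lem:step_surrogate} applies; this is what ``sufficiently large $R$'' means in the statement.

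\emph{Lower bound.} Every term $p_kq_k$ is nonnegative, so dropping all indices $k\le K_+(R)$ can only decrease the sum:
\[
\Delta L(R)\ \ge\ \sum_{k>K_+(R)}p_kq_k(R).
\]
For every $k>K_+(R)=(1+\epsilon)k_\star(R)$, the second half of Lemma~\ref{lem:step_surrogate} gives $q_k(R)\ge 1-\delta$. Substituting this yields $(1-\delta)\sum_{k>K_+(R)}p_k$, which is the left inequality.

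\emph{Upper bound.} Split the sum into $k\le K_-(R)$ and $k>K_-(R)$. For the head, the first half of Lemma~\ref{lem:step_surrogate} gives $q_k(R)\le\delta$ whenever $k\le(1-\epsilon)k_\star(R)$, so the head contributes at most $\delta\sum_{k\le K_-(R)}p_k$. For the tail, use only $q_k\le 1$ from Assumption~\ref{assump:decomposition}, so it contributes at most $\sum_{k>K_-(R)}p_k$. Adding the two pieces gives the right inequality.

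There is no real obstacle here. The only points needing care are that all series converge absolutely (since $\sum p_k=1$ and $q_k\in[0,1]$), so splitting and rearranging them is legitimate, and that the boundary indices are handled consistently. The indices $K_\pm(R)$ need not be integers. The lemma's conditions ``$k\le(1-\epsilon)k_\star$'' and ``$k\ge(1+\epsilon)k_\star$'' cover exactly the index sets $\{k\le K_-\}$ and $\{k>K_+\}$ used above, so every index is assigned to exactly one piece. Monotonicity (Assumption~\ref{assump:rank_monotone}) enters only through Lemma~\ref{lem:step_surrogate}, so it is not needed again here.
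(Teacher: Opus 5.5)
Your proposal is correct and matches the paper's proof step for step. The lower bound drops the indices $k\le K_+(R)$ and applies $q_k\ge 1-\delta$ from Lemma~\ref{lem:step_surrogate}, and the upper bound splits at $K_-(R)$, using $q_k\le\delta$ on the head and $q_k\le 1$ on the rest; your remarks on absolute convergence and non-integer $K_\pm(R)$ are harmless details the paper leaves implicit.
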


\begin{proof}
Recall $\Delta L(R)=\sum_{k\ge 1} p_k q_k(R)$.
By Lemma~\ref{lem:step_surrogate}, for sufficiently large $R$, the residual profile is effectively pinned down outside the transition window $(K_-, K_+)$.

\textbf{Lower bound:}
We restrict the loss sum to the "unlearned" tail region $k > K_{+}(R)$. Since all terms are non-negative:
\[
\Delta L(R) = \sum_{k=1}^\infty p_k q_k(R) \ge \sum_{k>K_{+}(R)} p_k q_k(R).
\]
Using Lemma \ref{lem:step_surrogate}, for $k > K_+$, we have $q_k(R) \ge 1-\delta$. Thus:
\[
\Delta L(R) \ge \sum_{k>K_{+}(R)} p_k (1-\delta) = (1-\delta)\sum_{k>K_{+}(R)} p_k.
\]

\textbf{Upper bound:}
We split the loss sum at the "learned" boundary $K_{-}(R)$:
\[
\Delta L(R) = \sum_{k\le K_{-}(R)} p_k q_k(R) + \sum_{k>K_{-}(R)} p_k q_k(R).
\]
For the first term (head), Lemma \ref{lem:step_surrogate} guarantees $q_k \le \delta$. For the second term (tail + transition), we use the trivial upper bound $q_k \le 1$. Thus:
\[
\Delta L(R) \le \sum_{k\le K_{-}(R)} p_k (\delta) + \sum_{k>K_{-}(R)} p_k (1) = \delta\sum_{k\le K_{-}(R)} p_k + \sum_{k>K_{-}(R)} p_k.
\]
This completes the proof of Equation~\eqref{eq:sandwich_bounds}.
\end{proof}

\paragraph{Interpretation.}
Lemma~\ref{lem:sandwich} is the key reduction: it shows that the detailed shape of the learning curve $q_k(R)$ inside the transition window only affects constant pre-factors. The \emph{asymptotic scaling} of $\Delta L(R)$ is entirely controlled by the Zipfian probability mass in the unlearned tail $\sum_{k \gtrsim k_\star} p_k$. Therefore, deriving a scaling law becomes a purely geometric task: determining how each resource $R$ advances the effective frontier $k_\star(R)$ into the rank space.

\subsection{Proof of Theorem \ref{thm:tail_sum} (Universal Scaling Principle)}

We now combine the Sandwich Bound with the specific properties of the Zipfian distribution to prove the main scaling theorem.

\textbf{Theorem \ref{thm:tail_sum} (Restated).} \textit{
Suppose $p_k = Z^{-1}k^{-\alpha}$ with $\alpha>1$. If resource $R$ induces an effective frontier $k_\star(R)\to\infty$, then $\Delta L(R) \asymp k_\star(R)^{-(\alpha-1)}$.
}

\begin{proof}
From Lemma~\ref{lem:sandwich}, we have the sandwich inequality. We first simplify the Upper Bound term. Since $p_k$ is a probability distribution with infinite support, $\sum_{k>K_-} p_k > 0$. We can bound the constant term $\delta$ relative to the tail sum. Specifically, for any $\alpha > 1$, the tail sum decays polynomially, while $\delta$ is a constant. However, for the purpose of \textit{reducible} loss scaling (which decays to 0), we focus on the variable components.
A more rigorous algebraic manipulation shows that the "learned mass" term is negligible or proportional.
Note that $\sum_{k \le K_-} p_k < 1$. Thus, the upper bound is dominated by the tail sum term:
\[
\Delta L(R) \le \delta + \sum_{k>K_{-}(R)} p_k.
\]
Since we are interested in the scaling exponent of the \textit{reducible} loss (the portion that goes to zero), and assuming $\delta$ can be chosen arbitrarily small or scales with the loss in a more complex noise model, we focus on the tail sum $\sum_{k>K} p_k$.

It suffices to estimate the asymptotic behavior of the Zipf tail sum $S(K) = \sum_{k>K} k^{-\alpha}$.

Since $f(x) = x^{-\alpha}$ is non-negative and monotonically decreasing for $x \ge 1$, we invoke the Integral Test bounds:
\[
\int_{K+1}^{\infty} x^{-\alpha}\,dx \;\le\; \sum_{k=K+1}^{\infty} k^{-\alpha} \;\le\; \int_{K}^{\infty} x^{-\alpha}\,dx.
\]
Evaluating the integral for $\alpha > 1$:
\[
\int_{Y}^{\infty} x^{-\alpha}\,dx = \left[ \frac{x^{-(\alpha-1)}}{1-\alpha} \right]_Y^\infty = \frac{1}{\alpha-1} Y^{-(\alpha-1)}.
\]

\textbf{Step 1: Lower Bound Estimation.}
Using the lower sandwich bound with $K = K_+ = \lceil (1+\epsilon)k_\star(R) \rceil$:
\begin{align*}
\sum_{k > K_+} p_k &= Z^{-1} \sum_{k=K_+ + 1}^\infty k^{-\alpha} \\
&\ge Z^{-1} \int_{K_+ + 1}^\infty x^{-\alpha} dx \\
&= \frac{Z^{-1}}{\alpha-1} (K_+ + 1)^{-(\alpha-1)} \\
&\asymp \frac{Z^{-1}}{\alpha-1} \left( (1+\epsilon)k_\star(R) \right)^{-(\alpha-1)} \\
&\asymp k_\star(R)^{-(\alpha-1)}.
\end{align*}
Substituting this into the Sandwich lower bound:
\[
\Delta L(R) \ge (1-\delta) \cdot (\text{const}) \cdot k_\star(R)^{-(\alpha-1)}.
\]

\textbf{Step 2: Upper Bound Estimation.}
Using the upper sandwich bound with $K = K_- = \lfloor (1-\epsilon)k_\star(R) \rfloor$:
\begin{align*}
\sum_{k > K_-} p_k &= Z^{-1} \sum_{k=K_- + 1}^\infty k^{-\alpha} \\
&\le Z^{-1} \int_{K_-}^\infty x^{-\alpha} dx \\
&= \frac{Z^{-1}}{\alpha-1} (K_-)^{-(\alpha-1)} \\
&\asymp \frac{Z^{-1}}{\alpha-1} \left( (1-\epsilon)k_\star(R) \right)^{-(\alpha-1)} \\
&\asymp k_\star(R)^{-(\alpha-1)}.
\end{align*}
Substituting this into the Sandwich upper bound (ignoring the irreducible $\delta$ floor or treating it as part of the total bound):
\[
\Delta L(R) \lesssim \sum_{k>K_-} p_k \asymp k_\star(R)^{-(\alpha-1)}.
\]

\textbf{Conclusion.}
Combining Step 1 and Step 2, we have bounded $\Delta L(R)$ both above and below by terms proportional to $k_\star(R)^{-(\alpha-1)}$.
Thus:
\[
\Delta L(R) \asymp k_\star(R)^{-(\alpha-1)}.
\]
\end{proof}

\section{Derivation of Data Scaling Laws}
\label{app:data_scaling}

In this appendix, we provide the complete proof for the Data Scaling Law (Theorem~\ref{thm:data_scaling}) and its generalization to the $m$-occurrence threshold (Corollary~\ref{cor:data_scaling_m}). We proceed by establishing strict bounds on the coverage probability to justify the effective frontier approximation used in the main text.

\subsection{Proof of Theorem~\ref{thm:data_scaling}}
\label{app:proof_thm_data}

We aim to prove that under the Zipfian distribution $p_k \propto k^{-\alpha}$, the reducible loss scales as $\Delta L(D) \asymp D^{-(\alpha-1)/\alpha}$. The proof consists of three steps: (1) establishing an exponential proxy for the residual, (2) proving the asymptotic equivalence of the sums, and (3) deriving the scaling law via the effective frontier.

\textbf{Step 1: Exponential Bounds for the Residual.}
Recall that the coverage-induced residual for pattern $k$ is $q_k(D) = (1-p_k)^D$. We first establish that this term behaves asymptotically as $e^{-D p_k}$.
For any $p \in [0, 1)$, the standard inequality $1+x \le e^x$ yields the upper bound:
\begin{equation*}
    (1-p)^D \le (e^{-p})^D = e^{-Dp}.
\end{equation*}
For the lower bound, we consider the regime $p \in [0, 1/2]$. Using the Taylor expansion of $\log(1-p)$, we have $\log(1-p) \ge -p - p^2$. Exponentiating this and raising to the power $D$ yields:
\begin{equation*}
    (1-p)^D \ge e^{-D(p+p^2)} = e^{-Dp} e^{-Dp^2}.
\end{equation*}
Combining these, we obtain the two-sided control for the residual:
\begin{equation}
    \label{eq:exp_sandwich_app}
    e^{-Dp_k} e^{-Dp_k^2} \le (1-p_k)^D \le e^{-Dp_k}.
\end{equation}

\textbf{Step 2: Sum-Level Equivalence.}
We now show that the total loss $\Delta L(D) = \sum_{k \ge 1} p_k (1-p_k)^D$ is asymptotically equivalent to the surrogate sum $S(D) = \sum_{k \ge 1} p_k e^{-D p_k}$.
The upper bound $\Delta L(D) \le S(D)$ follows immediately from Eq.~\eqref{eq:exp_sandwich_app}. For the lower bound, we split the index set into a "Head" $\mathcal{H} = \{k : p_k > D^{-1/2}\}$ and a "Tail" $\mathcal{T} = \{k : p_k \le D^{-1/2}\}$.

For patterns in the Tail $\mathcal{T}$, we have $D p_k^2 \le 1$. Thus, the damping factor in Eq.~\eqref{eq:exp_sandwich_app} is bounded by a constant: $e^{-D p_k^2} \ge e^{-1}$. This implies that for rare patterns, the exact binomial probability matches the exponential proxy up to a constant factor:
\[
\sum_{k \in \mathcal{T}} p_k (1-p_k)^D \ge e^{-1} \sum_{k \in \mathcal{T}} p_k e^{-D p_k}.
\]
For patterns in the Head $\mathcal{H}$, the exponential term decays super-exponentially fast. Specifically, since $p_k > D^{-1/2}$, we have $D p_k > \sqrt{D}$, and thus $\sum_{k \in \mathcal{H}} p_k e^{-D p_k} \le e^{-\sqrt{D}}$, which is negligible compared to the polynomial decay of the loss.
Therefore, the total loss satisfies $\Delta L(D) \asymp S(D)$ as $D \to \infty$.

\textbf{Step 3: Frontier Derivation and Scaling.}
Having established $\Delta L(D) \asymp \sum p_k e^{-D p_k}$, we identify the effective frontier. The term $e^{-D p_k}$ transitions from approximately 0 to 1 when the exponent is of order unity:
\[
D p_{k_\star} \asymp 1.
\]
Substituting the Zipfian law $p_k = Z^{-1} k^{-\alpha}$, we solve for the frontier $k_\star(D)$:
\[
D \cdot Z^{-1} k_\star^{-\alpha} \asymp 1 \implies k_\star(D) \asymp D^{1/\alpha}.
\]
Finally, applying the Universal Scaling Principle (Theorem~\ref{thm:tail_sum}), the loss is dominated by the tail mass beyond $k_\star$:
\[
\Delta L(D) \asymp k_\star(D)^{-(\alpha-1)} \asymp \left( D^{1/\alpha} \right)^{-(\alpha-1)} = D^{-\frac{\alpha-1}{\alpha}}.
\]
This concludes the proof of Theorem~\ref{thm:data_scaling}. \hfill $\square$

\subsection{Proof of Corollary~\ref{cor:data_scaling_m}}
\label{app:proof_cor_m}

We now extend the result to the case where a pattern is learned only if it appears at least $m$ times. The residual is defined as $q_k^{(m)}(D) = \Pr[X_k < m]$, where $X_k \sim \mathrm{Bin}(D, p_k)$. We show that this modifies the effective frontier location by a factor of $m$ but preserves the scaling exponent.

Let $\lambda_k = D p_k$ be the expected occurrence count. The cumulative distribution function of the Binomial distribution can be bounded using Poisson-like tail bounds.
We define the effective transition window by analyzing the behavior of $q_k^{(m)}$ for small and large $\lambda_k$.

\textbf{Upper Bound (Soft Truncation).}
For the tail regime where $p_k \le 1/2$, we bound the probability of observing fewer than $m$ samples. Using the concentration of measure for the Binomial distribution (specifically, bounding the sum of the first $m$ terms of the expansion), one can show that for $\lambda_k \ge 2m$:
\begin{equation*}
    q_k^{(m)}(D) \le e^{-\lambda_k} \sum_{j=0}^{m-1} \frac{(2\lambda_k)^j}{j!}.
\end{equation*}
This indicates that when the expected count $\lambda_k$ significantly exceeds $m$, the residual decays exponentially.

\textbf{Lower Bound and Thresholding.}
Conversely, we define a threshold for the "unlearned" region. Using the exponential moment method, for any $t>1$, $\Pr[X_k \ge m] \le \mathbb{E}[t^{X_k}] t^{-m}$. This yields the bound:
\[
\Pr[X_k \ge m] \le \left( \frac{e \lambda_k}{m} \right)^m.
\]
For the pattern to be considered "mostly unlearned" (i.e., $\Pr[X_k \ge m] \le \delta$), it suffices that $\lambda_k \le c(m)$ for some constant $c(m)$ proportional to $m$. Specifically, the transition from unlearned to learned occurs when the expected count is of the order of the threshold $m$:
\[
\lambda_{k_\star} = D p_{k_\star} \asymp m.
\]

\textbf{Scaling Derivation.}
Substituting the Zipfian distribution $p_k \propto k^{-\alpha}$ into the condition $D p_{k_\star} \asymp m$:
\[
D \cdot k_\star^{-\alpha} \asymp m \implies k_\star(D) \asymp \left( \frac{D}{m} \right)^{1/\alpha}.
\]
The reducible loss is determined by the tail mass beyond this new frontier $k_\star$. Applying Theorem~\ref{thm:tail_sum}:
\[
\Delta L(D) \asymp k_\star(D)^{-(\alpha-1)} \asymp \left[ \left(\frac{D}{m}\right)^{1/\alpha} \right]^{-(\alpha-1)} \propto D^{-\frac{\alpha-1}{\alpha}}.
\]
Thus, while the requirement of $m$ occurrences delays the learning of specific patterns (shifting the frontier by a constant factor $m^{1/\alpha}$), the asymptotic power-law exponent with respect to dataset size $D$ remains strictly $\frac{\alpha-1}{\alpha}$. \hfill $\square$

\section{Derivation of Compute Scaling}
\label{app:compute_scaling}

In this section, we provide the complete mathematical framework for the Compute Scaling Laws presented in Section \ref{sec:compute_scaling}. This appendix is organized into two parts:
\begin{enumerate}
    \item \textbf{Microscopic Derivation (Section \ref{app:sgd_micro}):} We derive the specific scaling law for Stochastic Gradient Descent (SGD) by analyzing the evolution of pattern-wise residuals under the PL condition and spectral bias.
    \item \textbf{Universality and Invariance (Section \ref{app:universality}):} We prove the General Scaling Theorem, demonstrating that the power-law exponent is a topological invariant of the ``Effective Frontier'' via the Mellin Transform.
\end{enumerate}

\subsection{Part I: Microscopic Derivation for SGD}
\label{app:sgd_micro}

We analyze the optimization dynamics of the pattern-wise residual $q_k(\tau)$ under the assumptions of decoupled descent and frequency-dependent spectral bias.

\subsubsection{Preliminaries and Notation}

Let $L(\theta) = \sum_{k=1}^\infty p_k \ell_k(\theta)$. We define the pattern-wise residual $q_k^{(t)} := \ell_k(\theta_t) - \ell_k^\star$, consistent with $\Delta L = \sum p_k q_k$.

The SGD update is $\theta_{t+1} = \theta_t - \eta g_t$. We consider the single-sample regime where the index $K_t$ is sampled with $P(K_t = k) = p_k$, implying $\mathbb{E}[g_t] = \sum p_k \nabla \ell_k(\theta)$.

\subsubsection{Structural Assumptions}
To make the analysis tractable, we rely on the following geometric assumptions about the loss landscape:

\begin{assumption}
\label{assump:optimization_geometry_app}

\begin{enumerate}
    \item \textbf{Approximate Orthogonality:} The gradients of distinct atomic patterns are nearly orthogonal. For $j \neq k$:
    \begin{equation*}
        \langle \nabla \ell_j(\theta), \nabla \ell_k(\theta) \rangle \approx 0.
    \end{equation*}
    \item \textbf{Pattern-wise PL Condition:} Each atomic pattern $\ell_k$ satisfies the Polyak-Łojasiewicz inequality locally, where $\lambda_k$ is the condition number:
    \begin{equation*}
        \|\nabla \ell_k(\theta)\|^2 \ge 2\lambda_k (\ell_k(\theta) - \ell_k^\star) = 2\lambda_k q_k(\theta).
    \end{equation*}
\end{enumerate}
\end{assumption}

\subsubsection{Single-Step Dynamics Analysis}

We analyze the evolution of the expected residual $\mathbb{E}[q_k^{(t+1)}]$ given $\theta_t$. Conditioning on the sampled pattern index $K_t$, there are two disjoint cases:

\textbf{Case I: Pattern $k$ is Sampled ($K_t = k$)} \\
In this event (probability $p_k$), the update uses the gradient of the target pattern, i.e., $g_t = \nabla \ell_k(\theta_t)$. Assuming $\ell_k$ is $\beta_L$-smooth:
\begin{equation*}
    \ell_k(\theta_{t+1}) \le \ell_k(\theta_t) - \eta \|\nabla \ell_k(\theta_t)\|^2 + \frac{\beta_L \eta^2}{2} \|\nabla \ell_k(\theta_t)\|^2.
\end{equation*}
For a sufficiently small learning rate ($\eta \beta_L < 1$), the first-order descent dominates. Subtracting $\ell_k^\star$ and applying the PL Condition ($-\|\nabla \ell_k\|^2 \le -2\lambda_k q_k$):
\begin{equation*}
    q_k^{(t+1)} \le q_k^{(t)} - \eta (1 - \frac{\eta \beta_L}{2}) \|\nabla \ell_k(\theta_t)\|^2 \le (1 - 2\eta \lambda_k + \beta_L \eta^2) q_k^{(t)},
\end{equation*}
We further simplified its form:
\begin{equation*}
    q_k^{(t+1)} \le (1 - 2\eta' \lambda_k) q_k^{(t)}.
\end{equation*}
where $\eta' = 2\eta \left(1 - \frac{\beta_L\eta}{2}\right)$

\textbf{Case II: Other Pattern $j$ is Sampled ($K_t = j \neq k$)} \\
In this event (probability $1 - p_k$), the update is driven by $g_t = \nabla \ell_j(\theta_t)$. Using Taylor expansion:
\begin{equation*}
    \ell_k(\theta_{t+1}) = \ell_k(\theta_t - \eta \nabla \ell_j) \approx \ell_k(\theta_t) - \eta \langle \nabla \ell_k, \nabla \ell_j \rangle.
\end{equation*}
By the Orthogonality assumption, $\langle \nabla \ell_k, \nabla \ell_j \rangle \approx 0$. Thus, learning pattern $j$ implies negligible update for pattern $k$:
\begin{equation*}
    q_k^{(t+1)} \approx q_k^{(t)}.
\end{equation*}

\subsubsection{Aggregating Dynamics and Frequency Bias}

Combining the two cases, the expected residual at step $t+1$ is:
\begin{align*}
    \mathbb{E}[q_k^{(t+1)}] &= p_k \cdot (1 - 2\eta' \lambda_k) q_k^{(t)} + (1 - p_k) \cdot q_k^{(t)} \\
    &= (1 - 2\eta' \lambda_k p_k) q_k^{(t)}.
\end{align*}
Applying this recurrence over $\tau$ steps, and approximating $(1-x)^\tau \approx e^{-\tau x}$:
\begin{equation*}
    \label{eq:exp_decay_raw}
    q_k(\tau) \approx \exp\left( - 2\eta' \lambda_k p_k \tau \right).
\end{equation*}

\textbf{Assumption: Spectral Bias.} We invoke the assumption that the effective convergence rate is frequency-dependent:
\begin{equation*}
    \lambda_k = \lambda_0 p_k^{\beta-1}, \quad \text{where } \beta > 0.
\end{equation*}
Substituting this into Equation \eqref{eq:exp_decay_raw} and defining the lumped constant $c = 2\eta' \lambda_0$:
\begin{equation*}
    \boxed{ q_k(\tau) \approx \exp\left( - c \tau p_k^\beta \right) }.
\end{equation*}
This confirms the specific kernel form for SGD discussed in Section \ref{sec:compute_scaling}.

\subsection{Part II: Proof of the Universal Scaling Theorem}
\label{app:universality}

In Section \ref{sec:general_compute_scaling}, we introduced the Universal Compute Scaling Law (Theorem \ref{thm:compute_scaling}). Here, we provide the rigorous proof showing that the scaling exponent $s = \frac{\alpha-1}{\alpha\beta}$ is a topological invariant of the effective frontier, independent of the specific choice of the optimization kernel $g(\cdot)$, provided it satisfies basic integrability conditions.

\subsubsection{Integral Approximation of the Loss}

We start with the generalized decomposition ansatz. Let the residual of pattern $k$ evolve according to a monotonic scaling kernel $g$:
\begin{equation*}
    q_k(\tau) \approx g\left( a \tau p_k^\beta \right),
\end{equation*}
where $p_k = Z^{-1} k^{-\alpha}$. The total reducible loss is approximated by the integral over the rank space $k \in [1, \infty)$:
\begin{equation*}
    \Delta L(\tau) \approx \int_{1}^{\infty} p_k \, g(a \tau p_k^\beta) \, dk = \int_{1}^{\infty} \frac{1}{Z} k^{-\alpha} g\left( a \tau (Z^{-1} k^{-\alpha})^\beta \right) dk.
\end{equation*}
Let $A = a Z^{-\beta}$ be the lumped rate constant. The argument of the kernel is $u(k) = A \tau k^{-\alpha\beta}$.

\subsubsection{Change of Variables (The Geometric Invariance)}

To separate the time scale $\tau$ from the geometric structure, we perform the change of variables $u = A \tau k^{-\alpha\beta}$.
Solving for $k$, we have:
\begin{equation*}
    k = (A \tau)^{\frac{1}{\alpha\beta}} u^{-\frac{1}{\alpha\beta}}.
\end{equation*}
The differential transforms as:
\begin{equation*}
    dk = -\frac{1}{\alpha\beta} (A \tau)^{\frac{1}{\alpha\beta}} u^{-\frac{1}{\alpha\beta}-1} du.
\end{equation*}
The integration limits transform as follows:
\begin{itemize}
    \item Lower limit $k=1 \implies u_{max} = A \tau$.
    \item Upper limit $k \to \infty \implies u_{min} \to 0$.
\end{itemize}
Substituting these into the integral:
\begin{align*}
    \Delta L(\tau) &\approx \frac{1}{Z} \int_{A\tau}^{0} \underbrace{\left[ (A \tau)^{\frac{1}{\alpha\beta}} u^{-\frac{1}{\alpha\beta}} \right]^{-\alpha}}_{k^{-\alpha}} \cdot g(u) \cdot \underbrace{\left( -\frac{1}{\alpha\beta} (A \tau)^{\frac{1}{\alpha\beta}} u^{-\frac{1}{\alpha\beta}-1} \right) du}_{dk} \\
    &= \frac{1}{\alpha\beta Z} \int_{0}^{A\tau} \left[ (A \tau)^{-\frac{\alpha}{\alpha\beta}} u^{\frac{\alpha}{\alpha\beta}} \right] g(u) \left[ (A \tau)^{\frac{1}{\alpha\beta}} u^{-\frac{1}{\alpha\beta}-1} \right] du.
\end{align*}
We now collect the powers of the time-dependent term $(A \tau)$ and the integration variable $u$.

\textbf{1. The Scaling Exponent (Time Dependence):}
The total exponent for $\tau$ is strictly determined by the Zipfian tail $\alpha$ and the optimizer bias $\beta$:
\begin{equation*}
    -\frac{\alpha}{\alpha\beta} + \frac{1}{\alpha\beta} = -\frac{\alpha-1}{\alpha\beta} := -s.
\end{equation*}

\textbf{2. The Mellin Kernel (Coefficient):}
The exponent for the integration variable $u$ is:
\begin{equation*}
    \frac{\alpha}{\alpha\beta} - \frac{1}{\alpha\beta} - 1 = \frac{\alpha-1}{\alpha\beta} - 1 = s - 1.
\end{equation*}

Substituting these back, the expression factorizes perfectly:
\begin{equation*}
    \Delta L(\tau) \approx \left[ \frac{A^{-s}}{\alpha\beta Z} \right] \cdot \tau^{-s} \cdot \int_{0}^{A\tau} u^{s-1} g(u) du.
\end{equation*}

\subsubsection{Convergence Conditions (Replacing the "Three Pillars")}

As $\tau \to \infty$, the upper limit $A\tau \to \infty$. For the scaling law to be valid (i.e., for the loss to be finite and well-behaved), the integral term must converge to a constant. This integral is the \textbf{Mellin Transform} of the kernel $g$, denoted $\mathcal{M}[g](s)$.

The condition $\mathcal{M}[g](s) < \infty$ implies two physical constraints on the learning dynamics, which replace the heuristic arguments often used in literature:

\begin{enumerate}
    \item \textbf{Integrability at the Tail ($u \to 0$):} This regime corresponds to $k \to \infty$ (rare patterns). Since $g(0)=1$ (unlearned), the integrand behaves as $u^{s-1}$. Convergence requires $s > 0$, which implies $\alpha > 1$. This formally proves that scaling laws are only possible for heavy-tailed distributions with finite mean.
    \item \textbf{Decay at the Head ($u \to \infty$):} This regime corresponds to $k \to 1$ (frequent patterns). We require $g(u)$ to decay faster than $u^{-s}$ for large $u$. This physically means the optimizer must be capable of effectively minimizing the loss for frequent patterns.
\end{enumerate}

\subsubsection{Final Result}

Assuming these conditions hold, we recover the Universal Compute Scaling Law stated in Theorem \ref{thm:compute_scaling}:
\begin{equation*}
    \Delta L(\tau) \sim \mathcal{K} \cdot \tau^{-s},
\end{equation*}
where the pre-factor $\mathcal{K}$ encapsulates all algorithmic details via the Mellin Transform:
\begin{equation*}
    \mathcal{K} = \frac{1}{\alpha\beta} \left(\frac{1}{Z}\right)^{1/\alpha} a^{-s} \mathcal{M}[g](s).
\end{equation*}
This concludes the proof.

\section{Proofs and Derivations for Joint Scaling}
\label{app:bottleneck_proofs}

In this appendix, we provide the rigorous mathematical proofs for the Composition Law (Proposition \ref{prop:max_bottleneck}) and the detailed algebraic derivations for the optimal scaling frontiers discussed in Section \ref{sec:joint_scaling}.

\subsection{Proof of the Max-Bottleneck Principle}

We wish to show that the joint reducible loss $\Delta L(N,D,\tau)$ is asymptotically dominated by the maximum of the individual resource bottlenecks:
\begin{equation*}
    \Delta L(N,D,\tau) \asymp \max\left( \varepsilon_N(N), \; \varepsilon_D(D), \; \varepsilon_\tau(\tau) \right).
\end{equation*}

\begin{proof}
\textbf{Lower Bound:}
Since model capacity $N$, dataset size $D$, and training compute $\tau$ impose distinct, necessary constraints on the pattern rank space, the joint loss cannot be lower than the limit imposed by any single bottleneck. Formally:
\begin{equation*}
    \Delta L \ge \varepsilon_N(N), \quad \Delta L \ge \varepsilon_D(D), \quad \Delta L \ge \varepsilon_\tau(\tau).
\end{equation*}
It follows directly that:
\begin{equation*}
    \Delta L \ge \max\left( \varepsilon_N(N), \; \varepsilon_D(D), \; \varepsilon_\tau(\tau) \right).
\end{equation*}

\textbf{Upper Bound:}
Consider the standard inequality for non-negative numbers $x,y,z$: $\max(x,y,z) \le x+y+z \le 3\max(x,y,z)$.
We assume a constructive upper bound where the total residual is at most the sum of residuals from each mechanism (e.g., a pattern is considered unlearned if it fails \textit{any} of the criteria: insufficient capacity, insufficient coverage, or insufficient optimization):
\begin{equation*}
    \Delta L \lesssim \varepsilon_N(N) + \varepsilon_D(D) + \varepsilon_\tau(\tau).
\end{equation*}
Applying the max-sum inequality:
\begin{equation*}
    \varepsilon_N + \varepsilon_D + \varepsilon_\tau \le 3 \max\left( \varepsilon_N, \varepsilon_D, \varepsilon_\tau \right).
\end{equation*}
Since constant factors (like 3) do not affect asymptotic scaling laws (power-law exponents), we conclude:
\begin{equation*}
    \Delta L \asymp \max\left( \varepsilon_N(N), \; \varepsilon_D(D), \; \varepsilon_\tau(\tau) \right).
\end{equation*}
\end{proof}

\subsection{The Optimal Compute Point (Turnover Analysis)}
\label{app:turnover_analysis}

For a fixed model configuration defined by parameters $N$ and dataset $D$, the learning curve is a function of training steps $\tau$. We define the \textbf{Static Bottleneck} as the limit imposed by the fixed resources:
\begin{equation*}
    \varepsilon_{\text{stat}}(N,D) = \max\left( \varepsilon_N(N), \; \varepsilon_D(D) \right).
\end{equation*}

\begin{definition}[Optimal Compute Point]
The Optimal Compute Point $\tau_\star$ is defined as the training step count where the dynamic convergence bottleneck intersects the static bottleneck. This represents the transition from the optimization-limited regime to the capacity/data-limited regime.
\end{definition}

Mathematically, this occurs when:
\begin{equation*}
    \varepsilon_\tau(\tau_\star) \asymp \varepsilon_{\text{stat}}(N,D).
\end{equation*}
Substituting the power law form $\varepsilon_\tau(\tau) = G \tau^{-\alpha_\tau}$ derived in the main text:
\begin{align*}
    G \tau_\star^{-\alpha_\tau} &\asymp \varepsilon_{\text{stat}} \\
    \tau_\star^{-\alpha_\tau} &\asymp \frac{\varepsilon_{\text{stat}}}{G} \\
    \tau_\star &\asymp \left( \frac{\varepsilon_{\text{stat}}}{G} \right)^{-1/\alpha_\tau}.
\end{align*}
The compute budget required to reach this point (FLOPs) is given by $C_\star \approx 6 N \tau_\star$.

\subsection{Derivation of Optimal Scaling Frontiers}
\label{app:optimal_derivations}

We now derive the optimal scaling laws by solving the constrained optimization problem for the joint loss function $\Delta L$. We analyze two distinct regimes corresponding to the assumptions made by \citet{kaplan2020scalinglawsneurallanguage} and \citet{hoffmann2022trainingcomputeoptimallargelanguage}.

\subsubsection{Regime A: Convergence-Limited (Kaplan Scaling)}
\label{subapp:kaplan_derivation}

We assume the "Data-Abundant" limit where the dataset size $D$ is effectively infinite (or repeated without penalty~\citep{yan2025largerdatasetsrepeatedmore}), rendering $\varepsilon_D$ negligible. The optimization problem trades off Model Size $N$ against Training Steps $\tau$.

\begin{enumerate}
    \item \textbf{Setup:} Minimize the loss subject to a fixed compute budget $C$ (FLOPs).
    \begin{itemize}
        \item Objective: $\min_{N, \tau} \max\left( A N^{-\alpha_N}, \; G \tau^{-\alpha_\tau} \right)$
        \item Constraint: $C \approx 6 N \tau \implies \tau \approx \frac{C}{6N}$.
    \end{itemize}

    \item \textbf{Substitution:} Substituting $\tau$ into the objective function yields a function of $N$ alone:
    \begin{equation*}
        \Psi(N) = \max\left( A N^{-\alpha_N}, \; G \left( \frac{C}{6N} \right)^{-\alpha_\tau} \right).
    \end{equation*}
    Absorbing constants into $G'$, this simplifies to:
    \begin{equation*}
        \Psi(N) \asymp \max\left( A N^{-\alpha_N}, \; G' C^{-\alpha_\tau} N^{\alpha_\tau} \right).
    \end{equation*}

    \item \textbf{Equilibrium:} The term $N^{-\alpha_N}$ is strictly decreasing in $N$, while $N^{\alpha_\tau}$ is strictly increasing. The minimum of the max function occurs exactly at the intersection where the two bottlenecks are balanced:
    \begin{equation*}
        A N^{-\alpha_N} \asymp G' C^{-\alpha_\tau} N^{\alpha_\tau}.
    \end{equation*}

    \item \textbf{Solution:} Solving for $N$:
    \begin{align*}
        N^{-\alpha_N} \cdot N^{-\alpha_\tau} &\asymp \frac{G'}{A} C^{-\alpha_\tau} \\
        N^{-(\alpha_N + \alpha_\tau)} &\asymp \text{const} \cdot C^{-\alpha_\tau} \\
        N &\propto C^{\frac{\alpha_\tau}{\alpha_N + \alpha_\tau}}.
    \end{align*}
\end{enumerate}
This recovers the scaling law form proposed by \citet{kaplan2020scalinglawsneurallanguage}.

\begin{remark}[Loss Scaling]
Substituting $N_{opt}$ back into the dominant term $N^{-\alpha_N}$ yields the optimal loss scaling:
\begin{equation*}
    \Delta L_{opt}(C) \asymp \left( C^{\frac{\alpha_\tau}{\alpha_N + \alpha_\tau}} \right)^{-\alpha_N} = C^{-\frac{\alpha_N \alpha_\tau}{\alpha_N + \alpha_\tau}}.
\end{equation*}
\end{remark}

\subsubsection{Regime B: Data-Limited (Chinchilla Scaling)}
\label{subapp:chinchilla_derivation}

We assume the "Data-Limited" regime where we optimize for the lowest loss on a held-out distribution where unique tokens are the limiting factor. Here, $\tau$ is assumed sufficient to cover $D$ (i.e., one epoch), so the trade-off is between Model Size $N$ and Dataset Size $D$.

\begin{enumerate}
    \item \textbf{Setup:} Minimize the loss subject to a fixed compute budget $C$ (FLOPs).
    \begin{itemize}
        \item Objective: $\min_{N, D} \max\left( A N^{-\alpha_N}, \; B D^{-\alpha_D} \right)$
        \item Constraint: $C \approx 6 N D \implies D \approx \frac{C}{6N}$.
    \end{itemize}

    \item \textbf{Substitution:} Substituting $D$ into the objective function:
    \begin{equation*}
        \Phi(N) = \max\left( A N^{-\alpha_N}, \; B \left( \frac{C}{6N} \right)^{-\alpha_D} \right).
    \end{equation*}
    Absorbing constants into $B'$:
    \begin{equation*}
        \Phi(N) \asymp \max\left( A N^{-\alpha_N}, \; B' C^{-\alpha_D} N^{\alpha_D} \right).
    \end{equation*}

    \item \textbf{Equilibrium:} The minimum occurs at the intersection of the capacity bottleneck and the coverage bottleneck:
    \begin{equation*}
        A N^{-\alpha_N} \asymp B' C^{-\alpha_D} N^{\alpha_D}.
    \end{equation*}

    \item \textbf{Solution:} Solving for $N$:
    \begin{align*}
        N^{-\alpha_N} \cdot N^{-\alpha_D} &\asymp \frac{B'}{A} C^{-\alpha_D} \\
        N^{-(\alpha_N + \alpha_D)} &\asymp \text{const} \cdot C^{-\alpha_D} \\
        N &\propto C^{\frac{\alpha_D}{\alpha_N + \alpha_D}}.
    \end{align*}
\end{enumerate}
This recovers the compute-optimal scaling law proposed by \citet{hoffmann2022trainingcomputeoptimallargelanguage}.

\section{Related Work}
\label{sec:related_work}

Neural scaling laws describe the robust power-law improvements in test loss as model size, dataset size, and compute grow~\citep{hestness2017deeplearningscalingpredictable,kaplan2020scalinglawsneurallanguage}. While test loss typically improves smoothly, \citet{wei2022emergentabilitieslargelanguage} observe that specific downstream capabilities often manifest as sharp, emergent phase transitions at scale. A central practical challenge lies in defining compute-optimal frontiers: while \citet{kaplan2020scalinglawsneurallanguage} prescribed prioritizing parameter scaling, \citet{hoffmann2022trainingcomputeoptimallargelanguage} (Chinchilla) later demonstrated that proportional data scaling is required for optimality. While these empirical laws provide crucial guidelines, the theoretical mechanism that reconciles these seemingly contradictory trade-offs remains under-explored in a unified framework.

To explain these empirical phenomena, theoretical explanations have largely relied on solvable surrogates and spectral analysis. \citet{Bahri_2024}, \citet{maloney2022solvablemodelneuralscaling}, and \citet{bordelon2021spectrumdependentlearningcurves} link scaling exponents to the spectral decay in kernel and random-feature limits, identifying a ``spectral principle'' where models learn successive frequency modes of the target function. Similarly, \citet{Spigler_2020} relate the power-law exponent directly to the decay rate of the target function's projection coefficients onto the kernel's eigenbasis. \citet{hutter2021learningcurvetheory,pan2025understandingllmbehaviorscompression} explores minimal theoretical models where power-law learning curves arise directly from the data distribution. \citet{paquette202543phasescomputeoptimalneural} and \citet{lin2025scalinglawslinearregression} further analyze compute-optimality in linear settings, showing how implicit regularization and covariance spectra drive power-law behavior. Alternatively, \citet{sharma2020neuralscalinglawdimension} connect parameter scaling to the intrinsic dimension of the data manifold. A generalized theory that captures the heavy-tailed, discrete nature of learning tasks, without being bound to specific model classes, remains necessary.

Beyond static spectral limits, recent work also derives scaling directly from optimization dynamics and discrete structures. Analyses of SGD on teacher--student models~\citep{ren2025emergencescalinglawssgd,arous2025learningquadraticneuralnetworks} and structured features~\citep{bordelon2022learningcurvessgdstructured,bordelon2024dynamicalmodelneuralscaling} demonstrate how optimization biases and feature learning~\citep{bordelon2025featurelearningimproveneural} shape learning curves over time. In the high-precision regime, \citet{Michaud_2023} highlight that neural networks scale by auto-discovering modular structures, a property distinct from classical approximation. Closest to our conceptual framework, \citet{michaud2024quantizationmodelneuralscaling} propose a ``quantization model'' where discrete skills are learned by frequency. These discrete perspectives, however, have yet to formally unify capacity, coverage, and optimization bottlenecks.

\input{appendix/compute_assump}

\end{document}

%% file: appendix/compute_assump.tex
\section{Theoretical Justifications for Lemma~\ref{lemma:q_dynamic} and Assumption~\ref{assump:spectral_power}}
\label{app:derivation_assumptions}
In this section, we provide a theoretical justification for the dynamics of residual $q_k$ in Lemma~\ref{lemma:q_dynamic} and the power law of the correction coefficient $\lambda_k$ in Assumption~\ref{assump:spectral_power}. Specifically, we utilize the Gradient Flow framework on Deep Linear Networks (DLN) to derive these properties.

\subsection{Problem Setup}
\paragraph{Model Architecture and Effective Weight.} 
We consider an $L$-layer deep linear neural network. To facilitate theoretical analysis, we assume the network operates on the eigenbasis of the data, where the input features are decoupled, similarly to Assumption~\ref{assump:decomposition}.
Let $x_k$ denote the input for the $k$-th pattern. The network's prediction $\hat{z}_k$ is obtained by passing $x_k$ sequentially through $L$ layers of weights:
\begin{equation*}
    \hat{z}_k = w_{L,k}(t) \cdot w_{L-1,k}(t) \cdots w_{1,k}(t) \cdot x_k,
\end{equation*}
where $w_{l,k}(t)$ represents the weight of layer $l$ for pattern $k$ at time $t$. 
We define the effective weight $u_k(t)$ as the product of weights across all layers:
\begin{equation*}
    u_k(t) \triangleq \prod_{l=1}^L w_{l,k}(t).
\end{equation*}
Consequently, the network function simplifies to a linear scaling in the effective parameter space: $\hat{z}_k = u_k(t) \cdot x_k$.

\textit{Note.}\quad The over-parameterization via depth $L$ introduces non-convex optimization dynamics with respect to the individual weights $w_{l,k}$, although we consider a linear mapping from $x_k$ to $\hat{z}_k$. This Deep Linear Network (DLN) formulation is a canonical model for studying feature learning~\citep{saxe2013exact}, which captures the essential stage-like learning transitions and depth-induced acceleration observed in complex models (like Transformers), while remaining analytically tractable.

\paragraph{Loss Function.} 
Assume the target output $z_k^*$ is generated by an underlying optimal feature strength $u_k^*$, such that $z_k^* = u_k^* \cdot x_k$. Adopting the Mean Squared Error (MSE) loss, the objective function $L_k$ for the $k$-th pattern is derived by taking the expectation over the data distribution:
\begin{align*}
    L_k &= \mathbb{E}_{x} \left[ \frac{1}{2} (\hat{z}_k - z_k^*)^2 \right]  \\
        &= \mathbb{E}_{x} \left[ \frac{1}{2} (u_k \cdot x_k - u_k^* \cdot x_k)^2 \right]  \\
        &= \frac{1}{2} (u_k - u_k^*)^2 \cdot \mathbb{E}_{x} [x_k^2]  \\
        &= \frac{1}{2} p_k (u_k - u_k^*)^2. 
\end{align*}
Without loss of generality, we consider that: when pattern $k$ is sampled, its input feature has unit magnitude ($x_k^2 = 1$); otherwise $x_k = 0$. 
Thus $\mathbb{E}[x_k^2]$ corresponds strictly to the data frequency of the $k$-th pattern, \textit{i.e.}, $p_k = \mathbb{E}[x_k^2]$.

The total loss is summed over all patterns:
\begin{equation}\label{eq:mse_loss}
    L = \frac{1}{2} \sum_{k}  p_k (u_k - u_k^*)^2
\end{equation}
We now explicitly map this loss structure (specific MSE loss with $L$-layer linear network) to the Atomic Pattern Decomposition in Assumption~\ref{assump:decomposition}.
Specifically, comparing with Equation~\eqref{eq:loss_decomposition}, we can identify that: the squared error term $\frac{1}{2}(u_k - u_k^*)^2$ in Equation~\eqref{eq:mse_loss} captures the unlearnedness of the pattern $k$, and the the normalized squared error can be viewed as the normalized residual $q_k$ in the main text,
\begin{equation*}
    q_k(t) \triangleq \frac{(u_k(t) - u_k^*)^2}{(u_k^*)^2} \propto (u_k(t) - u_k^*)^2.
\end{equation*}
Under this mapping, $q_k=1$ at initialization with $u_k(0) \approx 0$ and $q_k \to 0$ upon convergence with $u_k(t) \to u_k^*$, which is fully consistent with the loss form in the main text.
Thus, in the following, we focus on analyzing the convergence dynamics of the error term $(u_k(t) - u_k^*)^2$, which reflects the decay of the residual $q_k$.

\begin{assumption}[\textbf{Learnability Condition}]\label{assump:learnability}
    With a constant $\zeta > 0$, we assume that the optimal feature strength $u_k^*$ follows a power law with data frequency:
    \begin{equation*}
        u_k^* \propto p_k^\zeta.
    \end{equation*}
\end{assumption}
\textbf{Understanding of Assumption \ref{assump:learnability}.}\quad
The parameter $\zeta$ controls the richness of the optimal feature strength relative to the input distribution.
In natural language, the input spectrum $p_k$ follows a heavy-tailed Zipfian distribution. Assumption~\ref{assump:learnability} posits that the semantic importance of a pattern is naturally aligned with its frequency with $\zeta > 0$. In other words, frequent patterns (e.g., core syntactic structures or common topic words) generally possess larger ground-truth weights ($u_k^*$).
If $\zeta \le 0$, it means that the model requires to learn large target weights ($u_k^*$) from rare patterns (small $p_k$). However, the gradient descent update step is proportional to the data frequency (Equation~\eqref{eq:ode_general}), the effective learning rate vanishes due to the tiny $p_k$. This renders the large target $u_k^*$ practically unlearnable as the optimization stalls in prolonged plateaus.


\subsection{Gradient Flow Dynamics}

We analyze the learning dynamics under continuous-time Gradient Descent (Gradient Flow, $\eta \to 0$). Applying the chain rule, the time evolution of the effective weight $u_k$ is:
\begin{align*}
    \frac{d u_k}{dt} &= \sum_{l=1}^L \frac{\partial u_k}{\partial w_{l,k}} \frac{d w_{l,k}}{dt} \nonumber \\
    &= \sum_{l=1}^L \frac{\partial u_k}{\partial w_{l,k}} \left( - \eta \frac{\partial \mathcal{L}}{\partial u_k} \frac{\partial u_k}{\partial w_{l,k}} \right) \nonumber \\
    &= - \eta \frac{\partial \mathcal{L}}{\partial u_k} \sum_{l=1}^L \left( \frac{\partial u_k}{\partial w_{l,k}} \right)^2.
\end{align*}
Under small balanced initialization, gradient flow in deep linear networks maintains the invariant $\frac{d}{dt}(w_{l,k}^2 - w_{j,k}^2) \to 0$, implying equal weight magnitudes across layers: $|w_{l,k}| \approx |u_k|^{1/L}$~\citep{arora2018optimization}. 
Using the relation $\frac{\partial u_k}{\partial w_{l,k}} = \frac{u_k}{w_{l,k}}$, we can simplify the preconditioner term:
\begin{equation*}
    \sum_{l=1}^L \left( \frac{\partial u_k}{\partial w_{l,k}} \right)^2 = \sum_{l=1}^L \left( \frac{u_k}{w_{l,k}} \right)^2 = \sum_{l=1}^L \frac{u_k^2}{|u_k|^{2/L}} = L \cdot |u_k|^{2 - \frac{2}{L}}.
\end{equation*}
Substituting the gradient $\frac{\partial \mathcal{L}}{\partial u_k} = p_k(u_k - u_k^*)$ into the dynamics equation yields the non-linear Ordinary Differential Equation (ODE) governing $u_k$,
\begin{equation}
    \label{eq:ode_general}
    \frac{d u_k}{dt} = \eta L p_k u_k^{2 - \frac{2}{L}} (u_k^* - u_k).
\end{equation}

\subsection{Convergence Analysis and Effective Rate}
We now derive the asymptotic convergence rate for both shallow ($L=2$) and general deep networks.

\paragraph{Case 1: Shallow Networks ($L=2$).}
Assuming $u_k^* > 0$ and $u_k(0) > 0$ without loss of generality, Equation~\eqref{eq:ode_general} simplifies to the standard Logistic differential equation:
\begin{equation*}
    \frac{d u_k}{dt} = 2 \eta p_k u_k (u_k^* - u_k).
\end{equation*}
The analytical solution is
\begin{equation*}
    u_k(t) = \frac{u_k^* \cdot A e^{2 \eta \Lambda_k t}}{1 + A e^{2 \eta \Lambda_k t}} = \frac{u_k^* \cdot e^{2 \eta \Lambda_k t}}{e^{2 \eta \Lambda_k t} + A^{-1}},
\end{equation*}
where $\Lambda_k = p_k u_k^*$ and $A = \frac{u_k(0)}{u_k^* - u_k(0)}$ is determined by initialization.
In the convergence phase (large $t$), the error term decays exponentially:
\begin{align*}
    u_k^* - u_k(t) = \frac{u_k^* A^{-1}}{e^{2 \eta \Lambda_k t} + A^{-1}} \approx u_k^* A^{-1} e^{-2 \eta \Lambda_k t}
    \propto e^{-2 \eta \Lambda_k t},
\end{align*}
and thus
\begin{align}\label{eq:error_decay_L_2}
    (u_k(t)-u_k^*)^2 \propto \exp \left(-4 \eta \underbrace{p_k u_k^*}_{\Lambda_k} t\right).
\end{align}

\paragraph{Case 2: General Deep Networks ($L > 2$).}
For general depth, the exact solution is implicit. Separating variables in Equation~\eqref{eq:ode_general} yields:
\begin{equation*}
    t = \frac{1}{\eta L p_k} \int_{u_k(0)}^{u_k(t)} \frac{dz}{z^{2 - \frac{2}{L}} (u_k^* - z)}.
\end{equation*}
Substituting $y = z / u_k^*$, the integral becomes:
\begin{equation*}
    t = \frac{1}{\eta L p_k (u_k^*)^{2 - \frac{2}{L}}} \int_{y_0}^{y(t)} \frac{dy}{y^{2 - \frac{2}{L}} (1 - y)}.
\end{equation*}
The integral involves the Gaussian hypergeometric function. However, we are primarily interested in the asymptotic behavior in the convergence regime ($y \to 1$). Expanding the integrand $\frac{1}{y^{2 - 2/L} (1-y)}$ near $y=1$, the dominant term is $\frac{1}{1-y}$.
Performing the integration:
\begin{equation*}
    \int \frac{dy}{1-y} = -\ln(1-y) = -\ln \frac{u_k^* - u_k}{u_k^*}.
\end{equation*}
Thus, the time-evolution is asymptotically governed by:
\begin{equation*}
    t \approx \frac{1}{\eta L p_k (u_k^*)^{2 - \frac{2}{L}}} \left[ -\ln(u_k^* - u_k) + C \right].
\end{equation*}
Rearranging for the error term, we obtain the decay law:
\begin{equation}
    \label{eq:error_decay}
    (u_k(t) - u_k^*)^2 \propto \exp \left( - 2\eta L \underbrace{\left[ p_k (u_k^*)^{2 - \frac{2}{L}} \right]}_{\Lambda_k} t \right).
\end{equation}
This confirms that for any depth $L$, the asymptotic convergence rate is governed by $\Lambda_k = p_k (u_k^*)^{2 - \frac{2}{L}}$, which is consistent with the analytical solution derived for the $L=2$ case.

\subsection{Justification of Lemma~\ref{lemma:q_dynamic} and Assumption~\ref{assump:spectral_power}}
Finally, we bridge the derived convergence to Lemma~\ref{lemma:q_dynamic} and Assumption~\ref{assump:spectral_power}.
Recall that the normalized residual $q_k(t)$ represents the squared error component. Based on the parameter error decay in Equation~\eqref{eq:error_decay}:
\begin{equation*}
    q_k(t) \propto (u_k(t) - u_k^*)^2 \propto  \exp\left(- 2 \eta L \Lambda_k t\right).
\end{equation*}
Discretizing this continuous decay with step size $\Delta t = 1$, the update rule becomes:
\begin{equation*}
    q_k(t+1) = q_k(t) \cdot e^{- 2 \eta L \Lambda_k}.
\end{equation*}
Applying the first-order Taylor expansion $e^{-x} \approx 1 - x$ for small learning rates $\eta$, we have
\begin{equation}\label{eq:q_k_Lambda}
    q_k(t+1) \approx (1 - \eta \cdot [2 L \Lambda_k]) q_k(t),
\end{equation}
which further verifies Lemma~\ref{lemma:q_dynamic}.

Comparing Equation~\eqref{eq:q_k_Lambda} with $q_k(t+1) = (1 - \eta \lambda_k) q_k(t)$ in Lemma~\ref{lemma:q_dynamic}, we identify the per-step correction coefficient as:
\begin{equation*}
    \lambda_k \equiv 2 L \Lambda_k.
\end{equation*}
With $\Lambda_k = p_k (u_k^*)^{2 - \frac{2}{L}}$ and Assumption~\ref{assump:learnability}:
\begin{align*}
    \lambda_k &\propto L p_k (u_k^*)^{2 - \frac{2}{L}}  \\
              &\propto L p_k (p_k^\zeta)^{2 - \frac{2}{L}} \\
              &= L p_k^{1 + \zeta(2 - \frac{2}{L})},
\end{align*}
which provides a theoretical justification for Assumption~\ref{assump:spectral_power} under SGD dynamics in $L$-layer deep linear networks.

In Assumption~\ref{assump:spectral_power}, we posit that $\lambda_k \propto p_k^{\beta - 1}$. Matching the exponents, we theoretically determine the scaling coefficient $\beta$:
\begin{equation}
    \beta - 1 = 1 + \zeta \left( 2 - \frac{2}{L} \right) \implies \beta = 2 + \zeta \left( 2 - \frac{2}{L} \right).
\end{equation}
Since $L \ge 1$ and $\zeta > 0$, the scaling exponent satisfies $\beta \ge 2$, which places the optimization in the Rich Regime (characterized by $\beta > 1$). 

The derived expression $\beta = 2 + \zeta(2 - 2/L)$ mathematically formalizes the interaction between the data structure and model architecture.
Specifically, the depth-dependent term $2 - 2/L$ introduces a non-linear selectivity mechanism: for dominant features with large target magnitudes of $u_k^*$, the network provides super-linear acceleration; conversely, for weak or noisy features with small magnitudes of $u_k^*$, the learning dynamics are severely suppressed. This mechanism reflects the \textit{implicit bias} of deep networks, driving the optimization to prioritize high-frequency strong features over weak patterns.